\def\eqref#1{equation~\ref{#1}}
\def\Eqref#1{Equation~\ref{#1}}
\def\1{\bm{1}}
\def\va{{\bm{a}}}
\def\vs{{\bm{s}}}
\def\vw{{\bm{w}}}
\def\vx{{\bm{x}}}
\DeclareMathAlphabet{\mathsfit}{\encodingdefault}{\sfdefault}{m}{sl}
\SetMathAlphabet{\mathsfit}{bold}{\encodingdefault}{\sfdefault}{bx}{n}
\newcommand{\KL}{D_{\mathrm{KL}}}
\newsavebox{\measurebox}
\title{Offline Reinforcement Learning via High-Fidelity Generative Behavior Modeling}
\author{Huayu Chen$^{1}$, Cheng Lu$^{1}$, Chengyang Ying$^{1}$,
Hang Su$^{1,2}$\footnotemark[1], Jun Zhu$^{1,2}$\footnotemark[1] \\
 $^{1}$Department of Computer Science \& Technology, Institute for AI, BNRist Center,\\
 Tsinghua-Bosch Joint ML Center, THBI Lab, Tsinghua University\\
 $^{2}$Pazhou Lab, Guangzhou, 510330, China\\
\texttt{chenhuay21@mails.tsinghua.edu.cn} \\
\texttt{\{lucheng.lc15,yingcy17\}@gmail.com}\\
\texttt{\{suhangss,dcszj\}@tsinghua.edu.cn}
}
\newtheorem{proposition}{Proposition}
\begin{document}

\maketitle
\renewcommand{\thefootnote}{\fnsymbol{footnote}}
\footnotetext[1]{H. Su and J. Zhu are corresponding authors.}
\begin{abstract}
In offline reinforcement learning, weighted regression is a common method to ensure the learned policy stays close to the behavior policy and to prevent selecting out-of-sample actions. In this work, we show that due to the limited distributional expressivity of policy models, previous methods might still select unseen actions during training, which deviates from their initial motivation. To address this problem, we adopt a generative approach by decoupling the learned policy into two parts: an expressive generative behavior model and an action evaluation model. The key insight is that such decoupling avoids learning an explicitly parameterized policy model with a closed-form expression. Directly learning the behavior policy allows us to leverage existing advances in generative modeling, such as diffusion-based methods, to model diverse behaviors. As for action evaluation, we combine our method with an in-sample planning technique to further avoid selecting out-of-sample actions and increase computational efficiency. Experimental results on D4RL datasets show that our proposed method achieves competitive or superior performance compared with state-of-the-art offline RL methods, especially in complex tasks such as AntMaze. We also empirically demonstrate that our method can successfully learn from a heterogeneous dataset containing multiple distinctive but similarly successful strategies, whereas previous unimodal policies fail. The source code is provided at \url{https://github.com/ChenDRAG/SfBC}.
\end{abstract}

\section{Introduction}
Offline reinforcement learning seeks to solve decision-making problems without interacting with the environment.
This is compelling because online data collection can be dangerous or expensive in many realistic tasks.
However, relying entirely on a static dataset imposes new challenges.
One is that policy evaluation is hard because the mismatch between the behavior and the learned policy usually introduces extrapolation error \citep{bcq}. In most offline tasks, it is difficult or even impossible for the collected transitions to cover the whole state-action space. When evaluating the current policy via dynamic programming, leveraging actions that are not presented in the dataset (out-of-sample) may lead to highly unreliable results, and thus performance degrade. Consequently, in offline RL it is critical to stay close to the behavior policy during training.

Recent advances in model-free offline methods mainly include two lines of work. The first is the adaptation of existing off-policy algorithms. These methods usually include value pessimism about unseen actions or regulations of feasible action space \citep{bcq, bear, cql}.
The other line of work \citep{awr, crr, awac} is derived from constrained policy search and mainly trains a parameterized policy via weighted regression. Evaluations of every state-action pair in the dataset are used as regression weights.

The main motivation behind weighted policy regression is that it helps prevent querying out-of-sample actions \citep{awac,iql}. However, we find that this argument is untenable in certain settings. Our key observation is that policy models in existing weighted policy regression methods are usually unimodal Gaussian models and thus lack distributional expressivity, while in the real world collected behaviors can be highly diverse. This distributional discrepancy might eventually lead to selecting unseen actions. For instance, given a bimodal target distribution, fitting it with a unimodal distribution unavoidably results in covering the low-density area between two peaks. 
In Section \ref{motivation}, we empirically show that lack of policy expressivity may lead to performance degrade.

Ideally, this problem could be solved by switching to a more expressive distribution class. However, it is nontrivial in practice since weighted regression requires exact and derivable density calculation, which places restrictions on distribution classes that we can choose from. Especially, we may not know what the behavior or optimal policy looks like in advance.

To overcome the limited expressivity problem, we propose to decouple the learned policy into two parts: an expressive generative behavior model and an action evaluation model. Such decoupling avoids explicitly learning a policy model whose target distribution is difficult to sample from, whereas learning a behavior model is much easier because sampling from the behavior policy is straightforward given the offline dataset collected by itself. Access to data samples from the target distribution is critical because it allows us to leverage existing advances in generative methods to model diverse behaviors. To sample from the learned policy, we use importance sampling to select actions from candidates proposed by the behavior model with the importance weights computed by the action evaluation model, which we refer to as \textbf{S}electing \textbf{f}rom \textbf{B}ehavior \textbf{C}andidates (\textbf{SfBC}).

However, the selecting-from-behavior-candidates approach introduces new challenges because it requires modeling behaviors with high fidelity, which directly determines the feasible action space. A prior work \citep{emaq} finds that typically-used VAEs do not align well with the behavior dataset, and that introducing building-in good inductive biases in the behavior model improves the algorithm performance. 
Instead, we propose to learn from diverse behaviors using a much more expressive generative modeling method, namely diffusion probabilistic models \citep{diffusion}, which have recently achieved great success in modeling diverse image distributions, outperforming other existing generative models \citep{diffusion_beat_gan}. 
We also propose a planning-based operator for Q-learning, which performs implicit planning strictly within dataset trajectories based on the current policy, and is provably convergent. The planning scheme greatly reduces bootstrapping steps required for dynamic programming and thus can help to further reduce extrapolation error and increase computational efficiency.

The main contributions of this paper are threefold:
1. We address the problem of limited policy expressivity in conventional methods by decoupling policy learning into behavior learning and action evaluation, which allows the policy to inherit distributional expressivity from a diffusion-based behavior model.
2. The learned policy is further combined with an implicit in-sample planning technique to suppress extrapolation error and assist dynamic programming over long horizons.
3. Extensive experiments demonstrate that our method achieves competitive or superior performance compared with state-of-the-art offline RL methods, especially in sparse-reward tasks such as AntMaze.

\section{Background}
\subsection{Constrained Policy Search in Offline RL}
Consider a Markov Decision Process (MDP), described by a tuple $\langle\mathcal{S},\mathcal{A},P,r,\gamma\rangle$. $\mathcal{S}$ denotes the state space and $\mathcal{A}$ is the action space. $P(\vs'|\vs,\va)$ and $r(\vs, \va)$ respectively represent the transition and reward functions, and $\gamma \in (0,1]$ is the discount factor.
Our goal is to maximize the expected discounted return $J(\pi) = \mathbb{E}_{\vs \sim \rho_\pi(\vs)}\mathbb{E}_{\va \sim \pi(\cdot|\vs)}\left[r(\vs, \va)\right]$ of policy $\pi$, where $\rho_\pi(\vs) = \sum_{n=0}^\infty \gamma^n p_\pi(\vs_n = \vs)$ is the discounted state visitation frequencies induced by the policy $\pi$ \citep{rlbook}.

According to the \textit{policy gradient theorem} \citep{PG}, given a parameterized policy $\pi_\theta$, and the policy's state-action function $Q^\pi$, the gradient of $J(\pi_\theta)$ can be derived as:
\begin{equation}
\label{Eq:objective}
    \nabla_\theta J(\pi_\theta) = \int_\mathcal{S} \rho_\pi(\vs) \int_\mathcal{A} \nabla_\theta \pi_\theta(\va | \vs) Q^\pi(\vs, \va) \mathrm{d}\va \ \mathrm{d}\vs.
\end{equation}
When online data collection from policy $\pi$ is not possible, it is difficult to estimate $\rho_\pi(\vs)$ in \Eqref{Eq:objective}, and thus the expected value of the Q-function $\eta(\pi_\theta) := \int_\mathcal{S} \rho_\pi(\vs) \int_\mathcal{A} \pi_\theta(\va | \vs) Q^\pi(\vs, \va)$.
Given a static dataset $\mathcal{D}^\mu$ consisting of multiple trajectories $\{\left(\vs_n, \va_n, r_n \right)\}$ collected by a behavior policy $\mu(\va|\vs)$, previous off-policy methods \citep{dpg, ddpg} estimate $\eta(\pi_\theta)$ with a surrogate objective $\hat{\eta}(\pi_\theta)$ by replacing $\rho_\pi(\vs)$ with $\rho_\mu(\vs)$. In offline settings, due to the importance of sticking with the behavior policy, prior works \citep{awr, awac} explicitly constrain the learned policy $\pi$ to be similar to $\mu$, while maximizing the expected value of the Q-functions:
\begin{equation}
    \mathop{\mathrm{arg \ max}}_{\pi} \quad \int_\mathcal{S} \rho_\mu(\vs) \int_\mathcal{A} \pi(\va | \vs) Q_\phi(\vs, \va) \ \mathrm{d}\va \ \mathrm{d}\vs - \frac{1}{\alpha}\int_\mathcal{S} \rho_\mu(\vs) \KL \left(\pi(\cdot |\vs) || \mu(\cdot |\vs) \right) \mathrm{d}\vs.
    \label{Eq:rl_main}
\end{equation}
The first term in \Eqref{Eq:rl_main} corresponds to the surrogate objective $\hat{\eta}(\pi_\theta)$, where $Q_\phi(\vs, \va)$ is a learned Q-function of the current policy $\pi$. The second term is a regularization term to constrain the learned policy within support of the dataset $\mathcal{D}^\mu$ with $\alpha$ being the coefficient. 

\subsection{Policy Improvement via Weighted Regression}
\label{vfbc}
The optimal policy $\pi^*$ for \Eqref{Eq:rl_main} can be derived  \citep{rwr, awr, awac} by use of Lagrange multiplier:
\begin{align}
    \pi^*(\va|\vs) &= \frac{1}{Z(\vs)} \ \mu(\va|\vs) \ \mathrm{exp}\left(\alpha Q_\phi(\vs, \va) \right),
\label{Eq:pi_optimal}
\end{align}
where $Z(\vs)$ is the partition function. \Eqref{Eq:pi_optimal} forms a policy improvement step. 

Directly sampling from $\pi^*$ requires explicitly modeling behavior $\mu$, which itself is challenging in continuous action-space domains since $\mu$ can be very diverse. 
Prior methods \citep{awr, crr, bail} bypass this issue by projecting $\pi^*$ onto a parameterized policy $\pi_\theta$:
\begin{align}
    & \mathop{\mathrm{arg \ min}}_{\theta} \quad \mathbb{E}_{\vs \sim \mathcal{D}^\mu} \left[ \KL \left(\pi^*(\cdot  | \vs) \middle|\middle| \pi_\theta(\cdot  | \vs)\right) \right]\nonumber \\
    = & \mathop{\mathrm{arg \ max}}_{\theta} \quad \mathbb{E}_{(\vs, \va) \sim \mathcal{D}^\mu} \left[ \frac{1}{Z(\vs)} \mathrm{log} \ \pi_\theta(\va | \vs) \ \mathrm{exp}\left(\alpha Q_\phi(\vs, \va) \right) \right].
    \label{Eq:wr}
\end{align}
Such a method is usually referred to as weighted regression, with $\mathrm{exp}\left(\alpha Q_\phi(\vs, \va)\right)$ being the regression weights.

Although weighted regression avoids the need to model the behavior policy explicitly, it requires calculating the exact density function $\pi_\theta(\va | \vs)$ as in \Eqref{Eq:wr}. This constrains the policy $\pi_\theta$ to distribution classes that have a tractable expression for the density function. We find this in practice limits the model expressivity and could be suboptimal in some cases (Section \ref{motivation}).
\subsection{Diffusion Probabilistic Model}
\label{diffusionbg}
Diffusion models \citep{sohl2015deep,diffusion,sde} are generative models by first defining a forward process to gradually add noise to an unknown data distribution $p_0(\vx_0)$ and then learning to reverse it. The forward process $\{ \vx(t) \}_{t\in [0, T]}$ is defined by a stochastic differential equation (SDE) $\mathrm{d}\vx_t = f(\vx_t, t) \mathrm{d}t + g(t) \mathrm{d} \vw_t$,  where $\vw_t$ is a standard Brownian motion and $f(t)$, $g(t)$ are hand-crafted functions \citep{sde} such that the transition distribution $p_{t0}(\vx_t|\vx_0)=\mathcal{N}(\vx_t|\alpha_t\vx_0, \sigma_t^2\bm{I})$ for some $\alpha_t,\sigma_t>0$ and $p_T(\vx_T)\approx \mathcal{N}(\vx_T|0,\bm{I})$. To reverse the forward process, diffusion models define a scored-based model $\vs_\theta$ and optimize the parameter $\theta$ by:
\begin{equation}
\mathop{\mathrm{arg \ min}}_{\theta} \quad \mathbb{E}_{t,\vx_0,\bm{\epsilon}}[\| \sigma_t \mathbf{s}_\theta(\vx_t, t) + \bm{\epsilon} \|_2^2],
\end{equation}
where $t\sim\mathcal{U}(0,T)$, $\vx_0\sim p_0(\vx_0)$, $\bm{\epsilon}\sim \mathcal{N}(0,\bm{I})$, $\vx_t=\alpha_t\vx_0+\sigma_t\bm{\epsilon}$.

Sampling by diffusion models can be alternatively viewed as discretizing the diffusion ODEs~\citep{sde}, which are generally faster than discretizing the diffusion SDEs~\citep{song2020denoising,lu2022dpm}. Specifically, the sampling procedure needs to first sample a pure Gaussian $\vx_T\sim\mathcal{N}(0,\bm{I})$, and then solve the following ODE from time $T$ to time $0$ by numerical ODE solvers:
\begin{equation}
\mathrm{d} \vx_t = \bigg[f(\vx_t, t) - \frac{1}{2}g^2(t) \vs_\theta(\vx_t,t)\bigg] \mathrm{d}t.
\label{Eq:prob_ode}
\end{equation}
Then the final solution $\vx_0$ at time $0$ is the sample from the diffusion models.

\section{Method}
\label{Method}
We propose a Selecting-from-Behavior-Candidates (SfBC) approach to address the limited expressivity problem in offline RL. Below we first motivate our method by highlighting the importance of a distributionally expressive policy in learning from diverse behaviors. Then we derive a high-level solution to this problem from a generative modeling perspective. 

\subsection{Learning from Diverse Behaviors}
\label{motivation}
In this section, we show that the weighted regression broadly used in previous works might limit the distributional expressivity of the policy and lead to performance degrade. As described in Section \ref{vfbc}, conventional policy regression methods project the optimal policy $\pi^*$ in \Eqref{Eq:pi_optimal} onto a parameterized policy set. In continuous action-space domains, the projected policy is usually limited to a narrow range of unimodal distributions (e.g., squashed Gaussian), whereas the behavior policy could be highly diverse (e.g., multimodal). Lack of expressivity directly prevents the RL agent from exactly mimicking a diverse behavior policy. This could eventually lead to sampling undesirable out-of-sample actions during policy evaluation and thus large extrapolation error. Even if Q-values can be accurately estimated, an inappropriate unimodal assumption about the optimal policy might still prevent extracting a policy that has multiple similarly rewarding but distinctive strategies.

\begin{wrapfigure}{r}{0.555\textwidth}{
\vskip -0.45cm
\sbox{\measurebox}{%
  \begin{minipage}[b]{.27\textwidth}
    \centering
    \vskip 0.2cm
    \subfloat{\label{fig:figB}\hspace{-0.1cm}\includegraphics[width=0.95\textwidth]{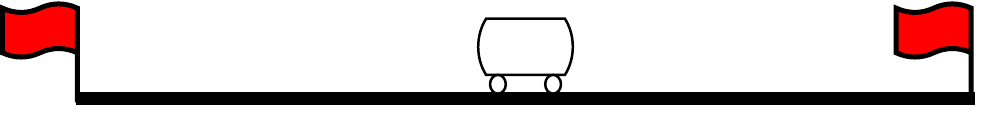}}
    \vfill
    \vskip 0.4cm
    \subfloat{\label{fig:figC}\includegraphics[width=\textwidth]{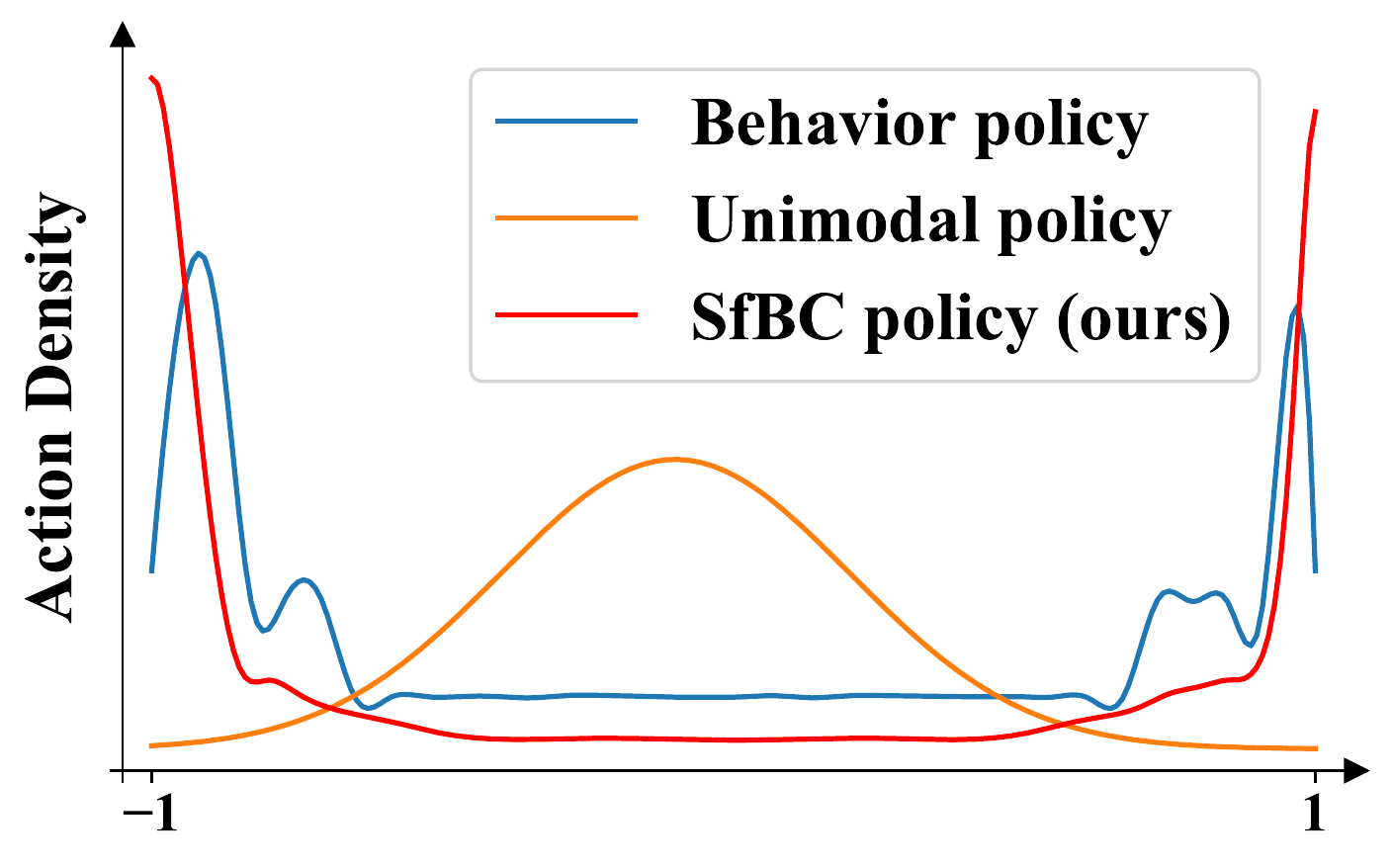}}
  \end{minipage}
  }
\usebox{\measurebox}
\begin{minipage}[b][\ht\measurebox][s]{.27\textwidth}
\vskip 0.1cm
\centering
\subfloat{\label{fig:figA}\includegraphics[width=\textwidth]{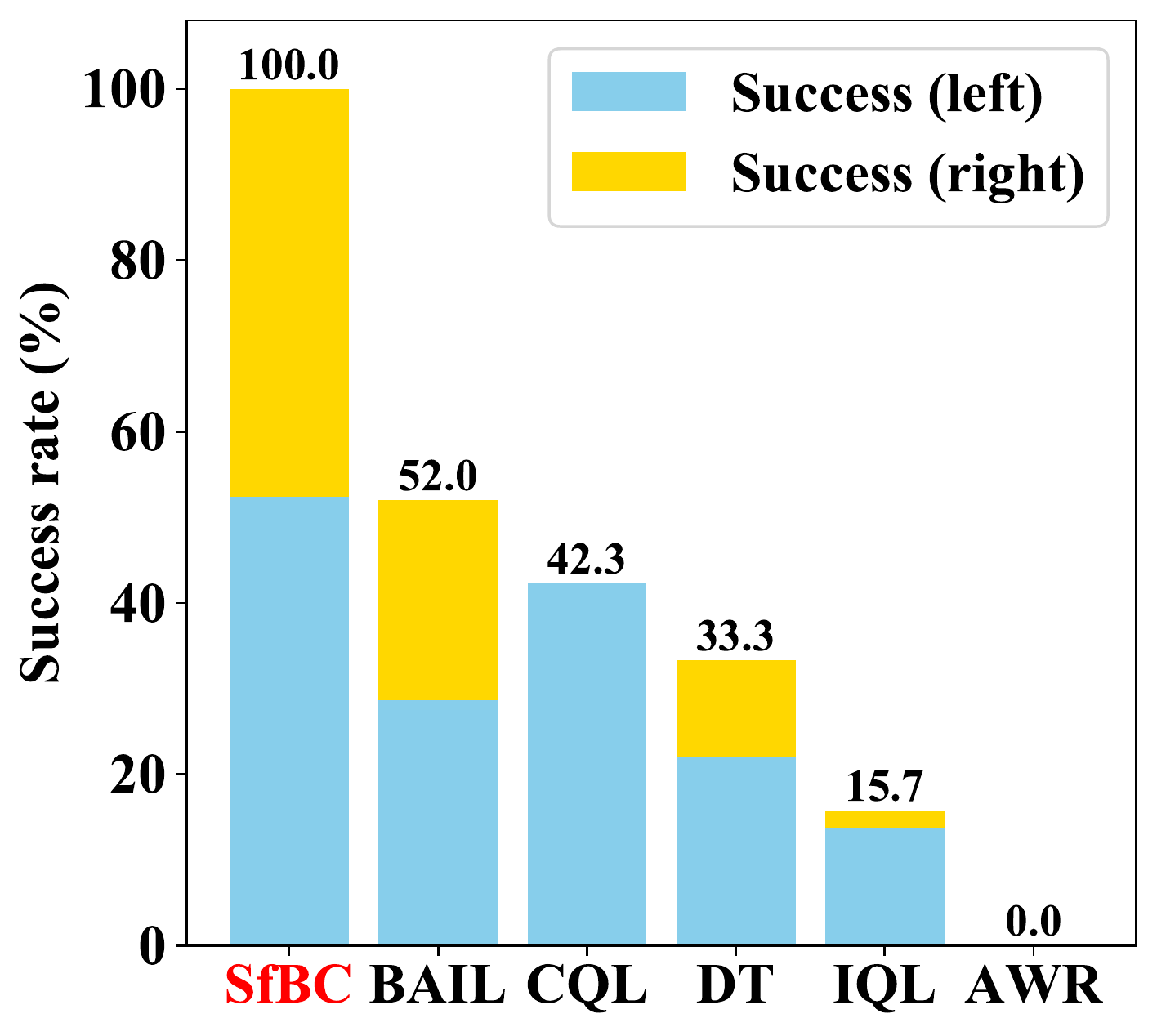}}
\end{minipage}
\caption{Illustration of the Bidirectional-Car task and comparison between SfBC and unimodal policies. See Section \ref{lfdb} for experimental details.}
\label{fig:illustration}
\vskip -0.0cm
}
\end{wrapfigure}
We design a simple task named Bidirectional Car to better explain this point. Consider an environment where a car placed in the middle of two endpoints can go either side to gain the final reward. If an RL agent finds turning left and right similarly rewarding, by incorrectly assuming a unimodal distribution of the behavior policy, it ends up staying put instead of taking either one of the optimal actions (Figure \ref{fig:illustration}).
As a result, unimodal policies fail to completely solve this task or loss diversity whereas a more distributionally expressive policy easily succeeds.

We therefore deduce that distributional expressivity is a necessity to enable diverse behavior learning.
To better model the complex behavior policy, we need more powerful generative modeling for the policy distribution, instead of the simple and unimodal Gaussians.

\subsection{Selecting from Behavior Candidates}
In this section, we provide a generative view of how to model a potentially diverse policy. Specifically, in order to model $\pi^*$ with powerful generative models, essentially we need to perform maximum likelihood estimation for the model policy $\pi_\theta$, which is equivalent to minimizing KL divergence between the optimal and model policy:
\begin{equation}
    \mathop{\mathrm{arg \ max}}_{\theta} \quad \mathbb{E}_{\vs \sim \mathcal{D}^\mu} \mathbb{E}_{a \sim \pi^*(\cdot | s)}\left[\log \pi_\theta(a|s) \right] \
    \Leftrightarrow \
    \mathop{\mathrm{arg \ min}}_{\theta} \quad \mathbb{E}_{\vs \sim \mathcal{D}^\mu} \left[ \KL \left(\pi^*(\cdot  | \vs) \middle|\middle| \pi_{\theta}(\cdot  | \vs)\right) \right].
\end{equation}
However, drawing samples directly from $\pi^*$ is difficult, so previous methods \citep{awr, awac, crr} rely on the weighted regression as described in \Eqref{Eq:wr}.

The main reason that limits the expressivity of $\pi_\theta$ is the need of calculating exact and derivable density function $\pi_\theta(\va | \vs)$ in policy regression, which places restrictions on distribution classes that we can choose from. Also, we might not know what the behavior or optimal policy looks like previously.

Our solution is based on a key observation that directly parameterizing the policy $\pi$ is not necessary. To better model a diverse policy, we propose to decouple the learning of $\pi$ into two parts. Specifically, we leverage \Eqref{Eq:pi_optimal} to form a policy improvement step:
\begin{equation}
\label{Eq:decouple}
    \pi(\va|\vs) \propto \mu_\theta(\va|\vs) \ \mathrm{exp}\left(\alpha Q_\phi(\vs,\va) \right).
\end{equation}
One insight of the equation above is that minimizing KL divergence between $\mu$ and $\mu_\theta$ is much easier compared with directly learning $\pi_\theta$ because sampling from $\mu$ is straightforward given $D^\mu$. This allows to us to leverage most existing advances in generative modeling (Section \ref{dbc}). $Q_\phi(s,a)$ could be learned using the existing Q-learning framework (Section \ref{imp}).

The inverse temperature parameter $\alpha$ in \Eqref{Eq:decouple} serves as a trade-off between conservative and greedy improvement. We can see that when $\alpha \to 0$, the learned policy falls back to the behavior policy, and when $\alpha \to +\infty$ the learned policy becomes a greedy policy.

To sample actions from $\pi$, we use an importance sampling technique. Specifically, for any state $\vs$, first we draw $M$ action samples from a learned behavior policy $\mu_\theta(\cdot|\vs)$ as candidates. Then we evaluate these action candidates with a learned critic $Q_\phi$. Finally, an action is resampled from $M$ candidates with $\mathrm{exp}\left(\alpha Q_\phi(\vs,\va) \right)$ being the sampling weights. We summarize this procedure as selecting from behavior candidates (SfBC), which could be understood as an analogue to rejection sampling.

Although generative modeling of the behavior policy has been explored by several works \citep{bcq, bear}, it was mostly used to form an explicit distributional constraint for the policy model $\pi_\theta$. In contrast, we show directly leveraging the learned behavior model to generate actions is not only feasible but beneficial on the premise that high-fidelity behavior modeling can be achieved. We give a practical implementation in the next section.
\section{Practical Implementation}
In this section, we derive a practical implementation of SfBC, which includes diffusion-based behavior modeling and planning-based Q-learning. An algorithm overview is given in Appendix \ref{overview}.
\subsection{Diffusion-based behavior modeling}
\label{dbc}
It is critical that the learned behavior model is of high fidelity because generating any out-of-sample actions would result in unwanted extrapolation error, while failing to cover all in-sample actions would restrict feasible action space for the policy. This requirement brings severe challenges to existing behavior modeling methods, which mainly include using Gaussians or VAEs. Gaussian models suffer from limited expressivity as we have discussed in Section \ref{motivation}. VAEs, on the other hand, need to introduce a variational posterior distribution to optimize the model distribution, which has a trade-off between the expressivity and the tractability~\citep{kingma2016improved, lucas2019understanding}. This still limits the expressivity of the model distribution. An empirical study is given in Section \ref{Sec:ablation}.

To address this problem, we propose to learn from diverse behaviors using diffusion models \citep{diffusion}, which have recently achieved great success in modeling diverse image distributions \citep{dalle2, imagen}, outperforming other generative models \citep{diffusion_beat_gan}. Specifically, we follow \citet{sde} and learn a state-conditioned diffusion model $s_\theta$ to predict the time-dependent noise added to the action $\va$ sampled from the behavior policy $\mu(\cdot|\vs)$:
\begin{equation}
\theta = \mathop{\mathrm{arg \ min}}_{\theta} \quad \mathbb{E}_{(\vs, \va) \sim D^\mu,\bm{\epsilon}, t}[\| \sigma_t \mathbf{s}_\theta(\alpha_t\va+\sigma_t\bm{\epsilon}, \vs, t) + \bm{\epsilon} \|_2^2],
\end{equation}
where $\bm{\epsilon}\sim \mathcal{N}(0,\bm{I})$, $t\sim\mathcal{U}(0,T)$. $\alpha_t$ and $\sigma_t$ are determined by the forward diffusion process. Intuitively $s_\theta$ is trained to denoise $\va_t := \alpha_t \va + \sigma_t \bm{\epsilon}$ into the unperturbed action $\va$ such that $\va_T \sim \mathcal{N}(0,\bm{I})$ can be transformed into $\va \sim \mu_\theta(\cdot|\vs)$ by solving an inverse ODE defined by $s_\theta$ (\Eqref{Eq:prob_ode}).

\subsection{Q-learning via in-sample planning}
\label{imp}
Generally, Q-learning can be achieved via the Bellman expectation operator:
\begin{equation}
    \label{Eq:one_step_bellman}
    \mathcal{T}^\pi Q(\vs, \va) = r(\vs, \va) + \gamma\mathbb{E}_{\vs' \sim P(\cdot|\vs,\va), \va' \sim \pi(\cdot|\vs')} Q(\vs', \va').
\end{equation}
However, $\mathcal{T}^\pi$ is based on one-step bootstrapping, which has two drawbacks: First, this can be computationally inefficient due to its dependence on many steps of extrapolation. This drawback is exacerbated in diffusion settings since drawing actions from policy $\pi$ in \Eqref{Eq:one_step_bellman} is also time-consuming because of many iterations of Langevin-type sampling. Second, estimation errors may accumulate over long horizons. To address these problems, we take inspiration from episodic learning methods \citep{MFEC, vem} and propose a planning-based operator $\mathcal{T}^\pi_\mu$:
\begin{equation}
    \label{Eq:planning_operator}
    \mathcal{T}^\pi_\mu Q(\vs, \va) := \max_{n \geq 0}\{(\mathcal{T}^{\mu})^{n}\mathcal{T}^{\pi}Q(\vs, \va)\},
\end{equation}
where $\mu$ is the behavior policy. $\mathcal{T}^\pi_\mu$ combines the strengths of both the n-step operator $(\mathcal{T}^{\mu})^{n}$, which enjoys a fast contraction property, and the operator $\mathcal{T}^\pi$, which has a more desirable fixed point. We prove in Appendix \ref{analysis} that $\mathcal{T}^\pi_\mu$ is also convergent, and its fixed point is bounded between $Q^\pi$ and $Q^*$.

Practically, given a dataset $\mathcal{D}^\mu = \{(s_n, a_n, r_n)\}$ collected by behavior $\mu$, with $n$ being the timestep in a trajectory. We can rewrite \Eqref{Eq:planning_operator} in a recursive manner to calculate the Q-learning targets:
\begin{align}
    \label{Eq:planning:1}
    &R_{n}^{(k)} = r_n + \gamma\max(R_{n+1}^{(k)}, V_{n+1}^{(k-1)}), \\
    \label{Eq:planning:2}
    \text{where} \quad &V_{n}^{(k-1)} := \mathbb{E}_{\va \sim \pi(\cdot|\vs_n)} Q_\phi(\vs_n, \va), \\ %
    \text{and} \quad &\phi = \mathop{\mathrm{arg \ min}}_{\phi} \quad \mathbb{E}_{(\vs_n, \va_n) \sim \mathcal{D}^\mu} \| Q_\phi(\vs_n, \va_n) - R_n^{(k-1)} \|_2^2.
\end{align}
Above $k \in \{1, 2, \dots \}$ is the iteration number. We define $R_{n}^{(0)}$ as the vanilla return of trajectories. \Eqref{Eq:planning:1} offers an implicit planning scheme within dataset trajectories that mainly helps to avoid bootstrapping over unseen actions and to accelerate convergence. \Eqref{Eq:planning:2} enables the generalization of actions in similar states across different trajectories (stitching together subtrajectories). Note that we have omitted writing the iteration superscript of $\pi$ and $\mu$ for simplicity. During training, we alternate between calculating new Q-targets $R_n$ and fitting the action evaluation model $Q_\phi$.

Although the operator $\mathcal{T}^\pi_\mu$ is inspired by the multi-step estimation operator $\mathcal{T}_{\text{vem}}$ proposed by \citet{vem}. They have notable differences in theoretical properties. First, $\mathcal{T}_{\text{vem}}$ can only apply to deterministic environments, while our method also applies to stochastic settings. Second, unlike $\mathcal{T}_{\text{vem}}$, $\mathcal{T}^\pi_\mu$ does not share the same fixed point with $\mathcal{T}^\pi$. We compare two methods in detail in Appendix \ref{connection}.

\begin{figure*} [t] \centering
\includegraphics[width=1.00\textwidth]{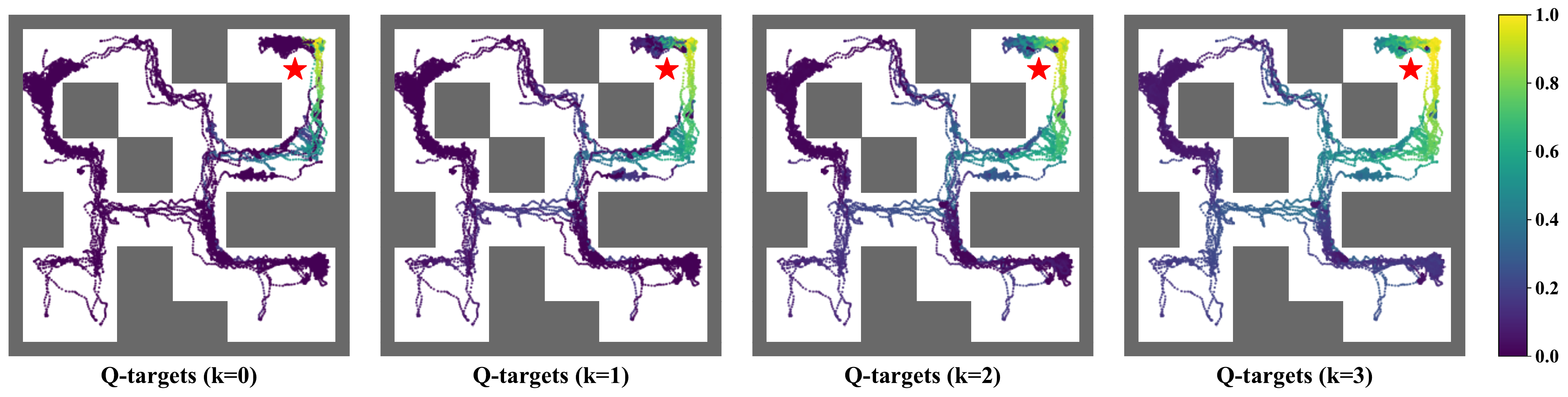}
\caption{
Visualizations of the implicitly planned Q-targets $R_n^{(k)}$ sampled from the dataset of an AntMaze task in four consecutive value iterations. The red pentagram stands for the reward signal. Implicit planning helps to iteratively stitch together successful subtrajectories.
}
\label{fig:ant_stitch}
\end{figure*}

\section{Related Work}
\label{Related}
\textbf{Reducing extrapolation error in offline RL}. Offline RL typically requires careful trade-offs between maximizing expected returns and staying close to the behavior policy. Once the learned policy deviates from the behavior policy, extrapolation error will be introduced in dynamic programming, leading to performance degrade \citep{bcq}. Several works propose to address this issue by introducing either policy regularization on the distributional discrepancy with the behavior policy \citep{bcq, bear, brac, minimal}, or value pessimism about unseen actions \citep{cql, fisher}. Another line of research directly extracts policy from the dataset through weighted regression, hoping to avoid selecting unseen actions \citep{awr, awac, crr}. However, some recent works observe that the trade-off techniques described above are not sufficient to reduce extrapolation error, and propose to learn Q-functions through expectile regression without ever querying policy-generated actions \citep{iql, vem}. Unlike them, We find that limited policy expressivity is the main reason that introduces extrapolation error in previous weighted regression methods, and use an expressive policy model to help reduce extrapolation error.

\textbf{Dynamic programming over long horizons}. Simply extracting policies from behavior Q-functions can yield good performance in many D4RL tasks because it avoids dynamic programming and therefore the accompanied extrapolation error \citep{awr, bail, noeval}. However, \citet{iql} shows this method performs poorly in tasks that require stitching together successful subtrajectories (e.g., Maze-like environments). Such tasks are also challenging for methods based on one-step bootstrapping because they might require hundreds of steps to reach the reward signal, with the reward discounted and estimation error accumulated along the way. Episodic memory-based methods address this problem by storing labeled experience in the dataset, and plans strictly within the trajectory to update evaluations of every decision \citep{MFEC, gem, vem}. The in-sample planning scheme allows dynamic programming over long horizons to suppress the accumulation of extrapolation error, which inspires our method.

\textbf{Generative models for behavior modeling}. Cloning diverse behaviors in a continuous action space requires powerful generative models. In offline RL, several works \citep{bcq, bear, brac, plas, lapo} have tried using generative models such as Gaussians or VAEs to model the behavior policy. However, the learned behavior model only serves as an explicit distributional constraint for another policy during training. In broader RL research, generative adversarial networks \citep{gan}, masked autoencoders \citep{made}, normalizing flows \citep{flow}, and energy-based models \citep{EBM} have also been used for behavior modeling \citep{gail, emaq, Parrot, EBI}. 
Recently, diffusion models \citep{diffusion} have achieved great success in generating diverse and high-fidelity image samples \citep{diffusion_beat_gan}. However, exploration of its application in behavior modeling is still limited. \citet{diffuser} proposes to solve offline tasks by iteratively denoising trajectories, while our method uses diffusion models for single-step decision-making. Concurrently with our work, \citet{diffusionQL} also studies applying diffusion models to offline RL to improve policy expressivity. However, they use diffusion modeling as an implicit regularization during training of the desired policy instead of an explicit policy prior.

\section{Experiments}

\begin{table*}[t]
\centering
\small
\resizebox{1.0\textwidth}{!}{%
\begin{tabular}{llccccccccc}
\toprule
\multicolumn{1}{c}{\bf Dataset} & \multicolumn{1}{c}{\bf Environment} & \multicolumn{1}{c}{\bf SfBC (Ours)}& \multicolumn{1}{c}{\bf IQL}  & \multicolumn{1}{c}{\bf VEM} & \multicolumn{1}{c}{\bf AWR} & \multicolumn{1}{c}{\bf BAIL} & \multicolumn{1}{c}{\bf BCQ} & \multicolumn{1}{c}{\bf CQL}& \multicolumn{1}{c}{\bf DT} & \multicolumn{1}{c}{\bf Diffuser} \\
\midrule
Medium-Expert & HalfCheetah    & $\bf{92.6 \pm 0.5}$  &  $86.7     $&    -        & $52.7$  &   $72.2$     & $64.7$  &  $62.4$    & $ 86.8$  & $    79.8$ \\
Medium-Expert & Hopper         & $\bf{108.6 \pm 2.1}$ &  $     91.5$&   -         & $27.1$  & $\bf{106.2}$ & $100.9$ &   $98.7$   & $\bf{107.6}$ & $\bf{107.2}$ \\
Medium-Expert & Walker         & $\bf{109.8 \pm 0.2}$ & $\bf{109.6}$&    -        & $53.8$  & $\bf{107.2}$ & $57.5$  & $\bf{111.0}$& $\bf{108.1}$& $\bf{108.4}$ \\
\midrule
Medium        & HalfCheetah    &  $ \bf{45.9\pm 2.2}$      &  $\bf{47.4}$&  $\bf{47.4}$& $37.4$  & $30.0$       & $40.7$  &  $44.4$    & $42.6$       & $ 44.2$ \\
Medium        & Hopper         &  $57.1 \pm 4.1$ &  $\bf{66.3}$&  $56.6$     & $35.9$  & $62.2$       & $54.5$  &$ 58.0 $    & $\bf{67.6}$  & $58.5$ \\
Medium        & Walker         &  $\bf{77.9\pm 2.5}$  &  $\bf{78.3}$&  $74.0$     & $17.4$  & $73.4$       & $53.1$  &$\bf{79.2}$ & $74.0$       & $\bf{79.7}$ \\
\midrule
Medium-Replay & HalfCheetah    &  $ 37.1 \pm 1.7$&  $\bf{44.2}$&    -        & $40.3$  &   $40.3$     & $38.2$  &$\bf{46.2}$ & $36.6$       & $42.2$ \\
Medium-Replay & Hopper         &  $86.2 \pm 9.1$      &  $\bf{94.7}$&    -        & $28.4$  &   $\bf{94.7}$& $33.1$  &   $48.6$   & $82.7$       & $\bf{96.8}$ \\
Medium-Replay & Walker         &  $65.1 \pm 5.6$ &  $\bf{73.9}$&    -        & $15.5$  &   $58.8$     & $15.0$  &   $26.7$   & $66.6$       & $61.2$ \\
\midrule
\multicolumn{2}{c}{\bf Average (Locomotion)}&$\bf{75.6}$ &  $\bf{76.9}$     &    -        & $34.3$  &   $71.6$     & $51.9$  &   $63.9$   & $\bf{74.7}$       & $\bf{75.3}$ \\
\specialrule{.05em}{.4ex}{.1ex}
\specialrule{.05em}{.1ex}{.65ex}
Default       & AntMaze-umaze  &  $\bf{92.0 \pm 2.1}$ &  $87.5$& $87.5$ & $56.0$  & $85.0$       & $78.9$  & $74.0$     & $59.2$       & - \\
Diverse       & AntMaze-umaze  &  $\bf{85.3 \pm 3.6}$ &  $62.2  $   & $78.0$      & $70.3$  & $76.7$       & $55.0$  &$\bf{84.0}$ & $53.0$       & - \\
\midrule  
Play          & AntMaze-medium &  $\bf{81.3 \pm 2.6}$ &  $71.2$     & $78.0$      & $0.0$   & $15.0$       & $0.0$   & $61.2$     & $0.0$        & - \\
Diverse       & AntMaze-medium &  $\bf{82.0 \pm 3.1}$ &  $70.0$     & $77.0$      & $0.0$   & $23.3$       & $0.0$   & $53.7$     & $0.0$        & - \\
\midrule
Play          & AntMaze-large  &  $\bf{59.3 \pm 14.3}$ &  $39.6$     & $57.0$      & $0.0$   & $0.0$        & $6.7$   & $15.8$     & $0.0$        & - \\
Diverse       & AntMaze-large  &  $45.5 \pm 6.6$      &  $47.5$     & $\bf{58.0}$ & $0.0$   & $8.3$        & $2.2$   & $14.9$     & $0.0$        & - \\
\midrule
\multicolumn{2}{c}{\bf Average (AntMaze)}&  $\bf{74.2}$         &  $63.0$     & $\bf{72.6}$      & $21.0$  &   $46.7$     & $23.8$  &   $50.6$   & $18.7$       & - \\
\specialrule{.05em}{.4ex}{.1ex}
\specialrule{.05em}{.1ex}{.65ex}
\multicolumn{2}{c}{\bf Average (Maze2d)}&  $74.0$              &  $50.0$     & -            & $10.8$  & -           & $9.1$    &   $7.7$   & -           & $\bf{119.5}$ \\
\specialrule{.05em}{.4ex}{.1ex}
\specialrule{.05em}{.1ex}{.65ex}
\multicolumn{2}{c}{\bf Average (FrankaKitchen)}&  $\bf{57.1}$         &  $53.3$     & -          & $8.7$   & -           & $11.7$  &   $48.2$   & -           & - \\
\specialrule{.05em}{.4ex}{.1ex}
\specialrule{.05em}{.1ex}{.65ex}
Both-side       & Bidirectional-Car&$\bf{100.0 \pm 0.0}$&  $15.7$        & $0.0$  &   $0.0$ & $52.0$       & $88.0$  &   $42.3$   & $33.3$       & - \\
Single-side        & Bidirectional-Car&$\bf{100.0 \pm 0.0}$&  $\bf{100.0}$       & $\bf{100.0}$&  $\bf{96.3}$ & $\bf{100.0}$      & $\bf{100.0}$  &   $\bf{100.0}$  & $\bf{100.0}$      & - \\

\bottomrule
\end{tabular}
}
\caption{
Evaluation numbers of SfBC. Scores are normalized according to \citet{d4rl}. Numbers within 5 percent of the maximum in every individual task are highlighted in boldface. Experiment and evaluation details are provided in Appendix \ref{details}. We report scores with 15 diffusion steps.}
\label{tbl:results}
\end{table*}

\subsection{Evaluations on D4RL Benchmarks}
In Table \ref{tbl:results}, we compare the performance of SfBC to multiple offline RL methods in several D4RL \citep{d4rl} tasks. 
\texttt{MuJoCo locomotion} is a classic benchmark where policy-generated datasets only cover a narrow part of the state-action space, so avoiding querying out-of-sample actions is critical \citep{bcq, cql}. The Medium dataset of this benchmark is generated by a single agent, while the Medium-Expert and the Medium-Replay dataset are generated by a mixture of policies.
\texttt{AntMaze} is about an ant robot navigating itself in a maze, which requires both low-level robot control and high-level navigation. Since the datasets consist of undirected trajectories, solving AntMaze typically requires the algorithm to have strong ``stitching'' ability \citep{d4rl}. Different environments contain mazes of different sizes, reflecting different complexity.
\texttt{Maze2d} is very similar to AntMaze except that it's about a ball navigating in a maze instead of an ant robot. \texttt{FrankaKitchen} are robot-arm manipulation tasks.  %
We only focus on the analysis of MuJoCo locomotion and AntMaze tasks due to the page limit. 
Our choices of referenced baselines are detailed in Appendix \ref{choice_baseline}. 

Overall, SfBC outperforms most existing methods by large margins in complex tasks with sparse rewards such as AntMaze. We notice that VEM also achieves good results in AntMaze tasks and both methods share an implicit in-sample planning scheme, indicating that episodic planning is effective in improving algorithms' stitching ability and thus beneficial in Maze-like environments. In easier locomotion tasks, SfBC provides highly competitive results compared with state-of-the-art algorithms. It can be clearly shown that performance gain is large in datasets generated by a mixture of distinctive policies (Medium-Expert) and is relatively small in datasets that are highly uniform (Medium). This is reasonable because SfBC is motivated to better model diverse behaviors.

\subsection{Learning from Diverse Behaviors}
\label{lfdb}
\begin{figure} [t] \centering
\includegraphics[width=1.00\textwidth]{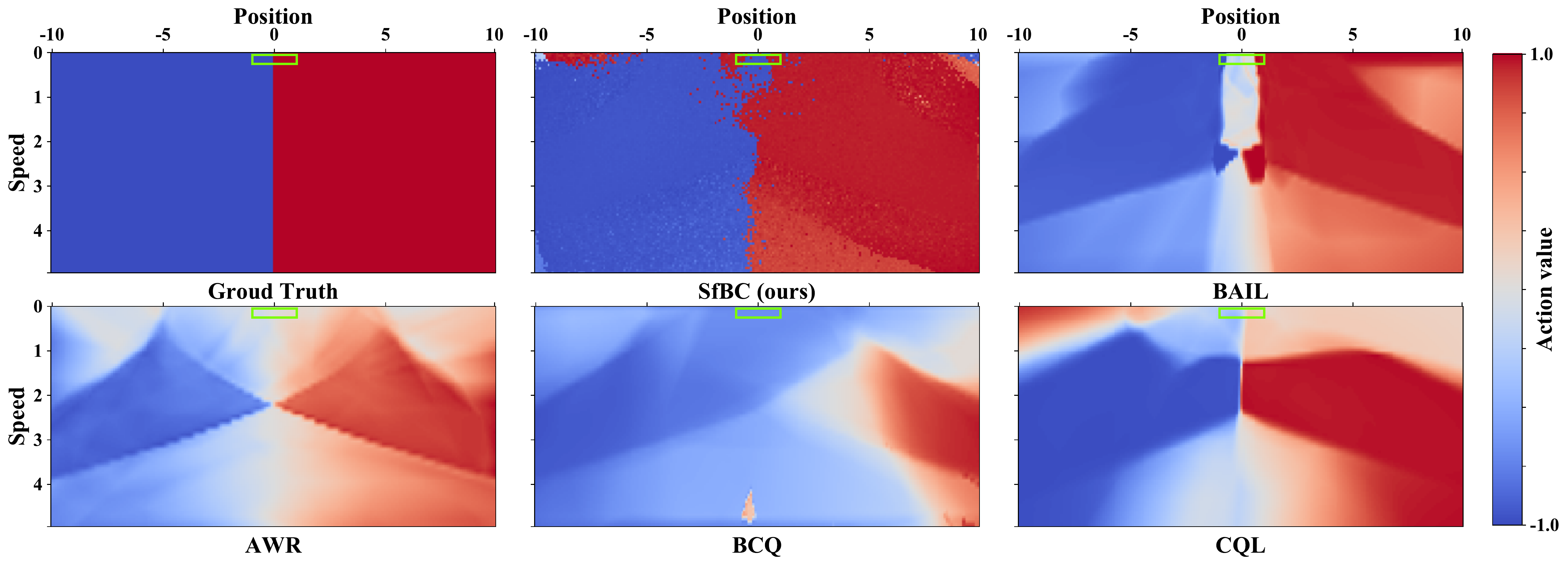}
\caption{
Visualizations of actions taken by different RL agents in the Bidirectional-Car task. The ground truth corresponds to an agent which always takes the best actions, which is either 1.0 or -1.0. White space indicates suboptimal decisions. Green bounding boxes indicate possible initial states.
}
\label{fig:toyacion}
\end{figure}
In this section, we analyze the benefit of modeling behavior policy using highly expressive generative models. Although SfBC outperforms baselines in many D4RL tasks. The improvement is mainly incremental, but not decisive. We attribute this to the lack of multiple optimal solutions in existing benchmarks. 
To better demonstrate the necessity of introducing an expressive generative model, we design a simple task where a heterogeneous dataset is collected in an environment that allows two distinctive optimal policies.

\textbf{Bidirectional-Car task}. As depicted in Figure \ref{fig:illustration}, we consider an environment where a car is placed in the middle of two endpoints.
The car chooses an action in the range [-1,1] at each step, representing throttle, to influence the direction and speed of the car. The speed of the car will \textit{monotonically} increase based on the absolute value of throttle. The direction of the car is determined by the sign of the current throttle. Equal reward will be given on the arrival of either endpoint within the rated time. 
It can be inferred with ease that, in any state, the optimal decision should be either 1 or -1, which is not a unimodal distribution. The collected dataset also contains highly diverse behaviors, with an approximately equal number of trajectories ending at both endpoints. For the comparative study, we collect another dataset called ``Single-Side'' where the only difference from the original one is that we remove all trajectories ending at the left endpoint from the dataset.

We test our method against several baselines, with the results given in Table \ref{tbl:results}.
Among all referenced methods, SfBC is the only one that can always arrive at either endpoint within rated time in the Bidirectional-Car environment, whereas most methods successfully solve the ``Single-Side'' task. To gain some insight into why this happens, we illustrate the decisions made by an SfBC agent and other RL agents in the 2-dimensional state space. As is shown in Figure \ref{fig:toyacion}, the SfBC agent selects actions of high absolute values at nearly all states, while other unimodal actors fail to pick either one of the optimal actions when presented with two distinctive high-rewarding options. Therefore, we conclude that an expressive policy is necessary for performing diverse behavior learning. 
\subsection{Ablation Studies}
\label{Sec:ablation}

\begin{figure}[t]
\centering
\includegraphics[width = .8\linewidth]{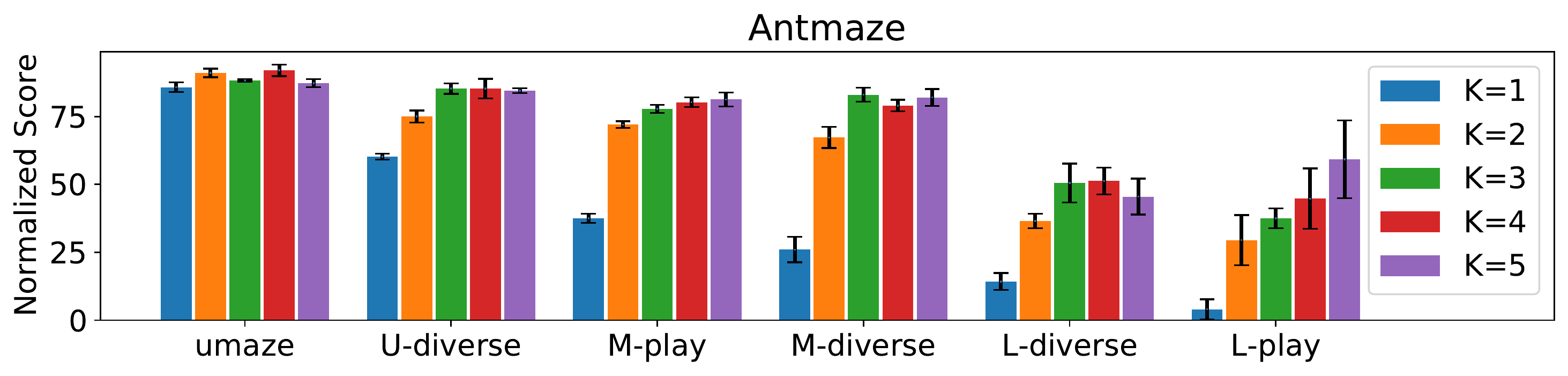}\\
\includegraphics[width = 1.0\linewidth]{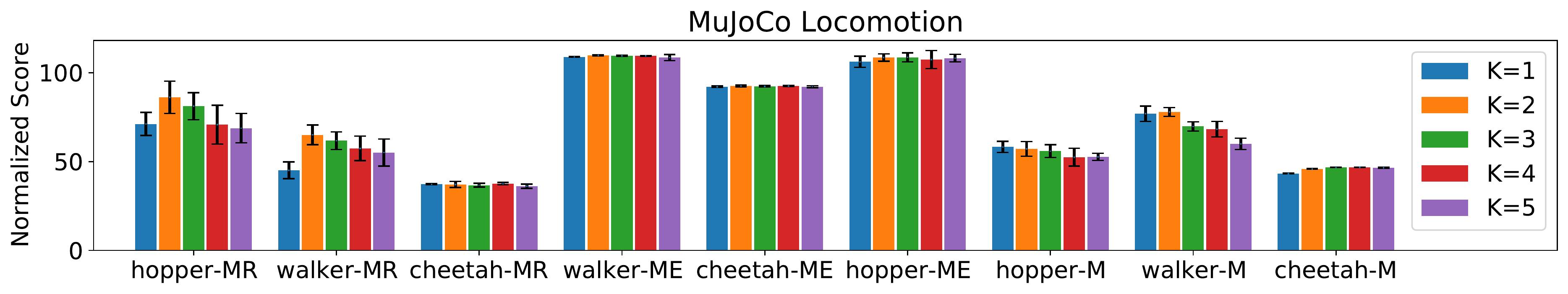}
\caption{Ablation studies of the value iteration number $K$ in MuJoCo Locomotion and Antmaze domains. $K = 1$ represents algorithms that use vanilla returns $R_n^{(0)}$ as Q-learning targets without the implicit planning technique. All results are averaged over 4 independent random seeds.}
\label{fig:ablationK}
\end{figure}

\textbf{Diffusion vs. other generative models}.
Our first ablation study aims to evaluate 3 variants of SfBC which are respectively based on diffusion models \citep{diffusion}, Gaussian probabilistic models, and latent-based models (VAEs, \citet{vae}). The three variants use exactly the same training framework with the only difference being the behavior modeling method. As is shown in Table \ref{tbl:ablation_full} in Appendix \ref{missing}, the diffusion-based policy outperforms the other two variants by a clear margin in most experiments, especially in tasks with heterogeneous datasets (e.g., Medium-Expert), indicating that diffusion models are fit for ``high-fidelity'' behavior modeling.

\textbf{Implicit in-sample planning}.
To study the importance of implicit in-sample planning on the performance of SfBC, we first visualize the estimated state values learned at different iterations of Q-learning in an AntMaze environment (Figure \ref{fig:ant_stitch}). We can see that implicit planning helps to iteratively stitch together successful subtrajectories and provides optimistic action evaluations.
Then we aim to study how the value iteration number $K$ affects the performance of the algorithm in various environments. As shown in Figure \ref{fig:ablationK}, we compare the performance of $K$ in the range $\{1,2,3,4,5\}$ and find that implicit planning is beneficial in complex tasks like AntMaze-Medium and AntMaze-Large. However, it is less important in MuJoCo-locomotion tasks.  
This finding is consistent with a prior work \citep{noeval}. 

\section{Conclusion}
In this work, we address the problem of limited policy expressivity in previous weighted regression methods by decoupling the policy model into a behavior model and an action evaluation model. Such decoupling allows us to use a highly expressive diffusion model for high-fidelity behavior modeling, which is further combined with a planning-based operator to reduce extrapolation error. Our method enables learning from a heterogeneous dataset in a continuous action space while avoiding selecting out-of-sample actions. Experimental results on the D4RL benchmark show that our approach outperforms state-of-the-art algorithms in most tasks. With this work, we hope to draw attention to the application of high-capacity generative models in offline RL.

\section*{Reproducibility}
To ensure that our work is reproducible, we submit the source code as supplementary material. We also provide the pseudo-code of our algorithm in Appendix \ref{overview} and implementation details of our algorithm in Appendix \ref{details}.

\section*{Acknowledgement}
We thank Shiyu Huang, Yichi Zhou, and Hao Hu for discussing. This work was supported by the National Key Research and Development
Program of China (2020AAA0106000,
2020AAA0106302, 2021YFB2701000), NSFC Projects (Nos.
62061136001, 62076147, U19B2034, U1811461, U19A2081,
61972224), BNRist
(BNR2022RC01006), Tsinghua Institute for Guo Qiang, and
the High Performance Computing Center, Tsinghua University.

\bibliography{iclr2023_conference}

\begin{thebibliography}{48}
\providecommand{\natexlab}[1]{#1}
\providecommand{\url}[1]{\texttt{#1}}
\expandafter\ifx\csname urlstyle\endcsname\relax
  \providecommand{\doi}[1]{doi: #1}\else
  \providecommand{\doi}{doi: \begingroup \urlstyle{rm}\Url}\fi

\bibitem[Blundell et~al.(2016)Blundell, Uria, Pritzel, Li, Ruderman, Leibo,
  Rae, Wierstra, and Hassabis]{MFEC}
Charles Blundell, Benigno Uria, Alexander Pritzel, Yazhe Li, Avraham Ruderman,
  Joel~Z Leibo, Jack Rae, Daan Wierstra, and Demis Hassabis.
\newblock Model-free episodic control.
\newblock \emph{arXiv preprint arXiv:1606.04460}, 2016.

\bibitem[Brandfonbrener et~al.(2021)Brandfonbrener, Whitney, Ranganath, and
  Bruna]{noeval}
David Brandfonbrener, Will Whitney, Rajesh Ranganath, and Joan Bruna.
\newblock Offline rl without off-policy evaluation.
\newblock \emph{Advances in Neural Information Processing Systems},
  34:\penalty0 4933--4946, 2021.

\bibitem[Chen et~al.(2021)Chen, Lu, Rajeswaran, Lee, Grover, Laskin, Abbeel,
  Srinivas, and Mordatch]{dt}
Lili Chen, Kevin Lu, Aravind Rajeswaran, Kimin Lee, Aditya Grover, Misha
  Laskin, Pieter Abbeel, Aravind Srinivas, and Igor Mordatch.
\newblock Decision transformer: Reinforcement learning via sequence modeling.
\newblock In \emph{Advances in Neural Information Processing Systems}, 2021.

\bibitem[Chen et~al.(2022)Chen, Ghadirzadeh, Yu, Wang, Gao, Li, Bin, Finn, and
  Zhang]{lapo}
Xi~Chen, Ali Ghadirzadeh, Tianhe Yu, Jianhao Wang, Yuan Gao, Wenzhe Li, Liang
  Bin, Chelsea Finn, and Chongjie Zhang.
\newblock Lapo: Latent-variable advantage-weighted policy optimization for
  offline reinforcement learning.
\newblock In \emph{Advances in Neural Information Processing Systems}, 2022.

\bibitem[Chen et~al.(2020)Chen, Zhou, Wang, Wang, Wu, and Ross]{bail}
Xinyue Chen, Zijian Zhou, Zheng Wang, Che Wang, Yanqiu Wu, and Keith Ross.
\newblock {BAIL}: Best-action imitation learning for batch deep reinforcement
  learning.
\newblock \emph{Advances in Neural Information Processing Systems}, 33, 2020.

\bibitem[Clemente et~al.(2017)Clemente, Castej{\'o}n, and Chandra]{acce}
Alfredo~V Clemente, Humberto~N Castej{\'o}n, and Arjun Chandra.
\newblock Efficient parallel methods for deep reinforcement learning.
\newblock \emph{arXiv preprint arXiv:1705.04862}, 2017.

\bibitem[Dhariwal \& Nichol(2021)Dhariwal and Nichol]{diffusion_beat_gan}
Prafulla Dhariwal and Alexander~Quinn Nichol.
\newblock Diffusion models beat {GAN}s on image synthesis.
\newblock In \emph{Advances in Neural Information Processing Systems}, 2021.

\bibitem[Dinh et~al.(2016)Dinh, Sohl-Dickstein, and Bengio]{flow}
Laurent Dinh, Jascha Sohl-Dickstein, and Samy Bengio.
\newblock Density estimation using real nvp.
\newblock \emph{arXiv preprint arXiv:1605.08803}, 2016.

\bibitem[Dormand \& Prince(1980)Dormand and Prince]{RKmethod}
John~R Dormand and Peter~J Prince.
\newblock A family of embedded runge-kutta formulae.
\newblock \emph{Journal of computational and applied mathematics}, 6\penalty0
  (1):\penalty0 19--26, 1980.

\bibitem[Du \& Mordatch(2019)Du and Mordatch]{EBM}
Yilun Du and Igor Mordatch.
\newblock Implicit generation and modeling with energy based models.
\newblock \emph{Advances in Neural Information Processing Systems}, 32, 2019.

\bibitem[Fu et~al.(2020)Fu, Kumar, Nachum, Tucker, and Levine]{d4rl}
Justin Fu, Aviral Kumar, Ofir Nachum, George Tucker, and Sergey Levine.
\newblock D4rl: Datasets for deep data-driven reinforcement learning.
\newblock \emph{arXiv preprint arXiv:2004.07219}, 2020.

\bibitem[Fujimoto \& Gu(2021)Fujimoto and Gu]{minimal}
Scott Fujimoto and Shixiang~Shane Gu.
\newblock A minimalist approach to offline reinforcement learning.
\newblock \emph{Advances in neural information processing systems},
  34:\penalty0 20132--20145, 2021.

\bibitem[Fujimoto et~al.(2019)Fujimoto, Meger, and Precup]{bcq}
Scott Fujimoto, David Meger, and Doina Precup.
\newblock Off-policy deep reinforcement learning without exploration.
\newblock In \emph{Proceedings of the 36th International Conference on Machine
  Learning}, volume~97 of \emph{Proceedings of Machine Learning Research}, pp.\
   2052--2062. PMLR, 09--15 Jun 2019.

\bibitem[Germain et~al.(2015)Germain, Gregor, Murray, and Larochelle]{made}
Mathieu Germain, Karol Gregor, Iain Murray, and Hugo Larochelle.
\newblock Made: Masked autoencoder for distribution estimation.
\newblock In \emph{International conference on machine learning}, pp.\
  881--889. PMLR, 2015.

\bibitem[Ghasemipour et~al.(2021)Ghasemipour, Schuurmans, and Gu]{emaq}
Seyed Kamyar~Seyed Ghasemipour, Dale Schuurmans, and Shixiang~Shane Gu.
\newblock Emaq: Expected-max q-learning operator for simple yet effective
  offline and online rl.
\newblock In \emph{International Conference on Machine Learning}, pp.\
  3682--3691. PMLR, 2021.

\bibitem[Goodfellow et~al.(2020)Goodfellow, Pouget-Abadie, Mirza, Xu,
  Warde-Farley, Ozair, Courville, and Bengio]{gan}
Ian Goodfellow, Jean Pouget-Abadie, Mehdi Mirza, Bing Xu, David Warde-Farley,
  Sherjil Ozair, Aaron Courville, and Yoshua Bengio.
\newblock Generative adversarial networks.
\newblock \emph{Communications of the ACM}, 63\penalty0 (11):\penalty0
  139--144, 2020.

\bibitem[Ho \& Ermon(2016)Ho and Ermon]{gail}
Jonathan Ho and Stefano Ermon.
\newblock Generative adversarial imitation learning.
\newblock In \emph{Advances in neural information processing systems},
  volume~29, pp.\  4565--4573, 2016.

\bibitem[Ho et~al.(2020)Ho, Jain, and Abbeel]{diffusion}
Jonathan Ho, Ajay Jain, and Pieter Abbeel.
\newblock Denoising diffusion probabilistic models.
\newblock In \emph{Advances in Neural Information Processing Systems}, 2020.

\bibitem[Hu et~al.(2021)Hu, Ye, Ren, Zhu, and Zhang]{gem}
Hao Hu, Jianing Ye, Zhizhou Ren, Guangxiang Zhu, and Chongjie Zhang.
\newblock Generalizable episodic memory for deep reinforcement learning.
\newblock \emph{arXiv preprint arXiv:2103.06469}, 2021.

\bibitem[Janner et~al.(2022)Janner, Du, Tenenbaum, and Levine]{diffuser}
Michael Janner, Yilun Du, Joshua Tenenbaum, and Sergey Levine.
\newblock Planning with diffusion for flexible behavior synthesis.
\newblock In \emph{International Conference on Machine Learning}, 2022.

\bibitem[Kingma \& Welling(2014)Kingma and Welling]{vae}
Diederik~P. Kingma and Max Welling.
\newblock Auto-encoding variational bayes.
\newblock In \emph{2nd International Conference on Learning Representations,
  {ICLR} 2014, Banff, AB, Canada, April 14-16, 2014, Conference Track
  Proceedings}, 2014.

\bibitem[Kingma et~al.(2016)Kingma, Salimans, Jozefowicz, Chen, Sutskever, and
  Welling]{kingma2016improved}
Durk~P Kingma, Tim Salimans, Rafal Jozefowicz, Xi~Chen, Ilya Sutskever, and Max
  Welling.
\newblock Improved variational inference with inverse autoregressive flow.
\newblock \emph{Advances in neural information processing systems}, 29, 2016.

\bibitem[Kostrikov et~al.(2021)Kostrikov, Fergus, Tompson, and Nachum]{fisher}
Ilya Kostrikov, Rob Fergus, Jonathan Tompson, and Ofir Nachum.
\newblock Offline reinforcement learning with fisher divergence critic
  regularization.
\newblock In \emph{International Conference on Machine Learning}, pp.\
  5774--5783. PMLR, 2021.

\bibitem[Kostrikov et~al.(2022)Kostrikov, Nair, and Levine]{iql}
Ilya Kostrikov, Ashvin Nair, and Sergey Levine.
\newblock Offline reinforcement learning with implicit {Q}-learning.
\newblock In \emph{International Conference on Learning Representations}, 2022.

\bibitem[Kumar et~al.(2019)Kumar, Fu, Tucker, and Levine]{bear}
Aviral Kumar, Justin Fu, George Tucker, and Sergey Levine.
\newblock Stabilizing off-policy q-learning via bootstrapping error reduction.
\newblock \emph{CoRR}, abs/1906.00949, 2019.
\newblock URL \url{http://arxiv.org/abs/1906.00949}.

\bibitem[Kumar et~al.(2020)Kumar, Zhou, Tucker, and Levine]{cql}
Aviral Kumar, Aurick Zhou, George Tucker, and Sergey Levine.
\newblock Conservative {Q}-learning for offline reinforcement learning.
\newblock \emph{arXiv preprint arXiv:2006.04779}, 2020.

\bibitem[Lillicrap et~al.(2016)Lillicrap, Hunt, Pritzel, Heess, Erez, Tassa,
  Silver, and Wierstra]{ddpg}
Timothy~P. Lillicrap, Jonathan~J. Hunt, Alexander Pritzel, Nicolas Manfred~Otto
  Heess, Tom Erez, Yuval Tassa, David Silver, and Daan Wierstra.
\newblock Continuous control with deep reinforcement learning.
\newblock \emph{ICLR}, 2016.

\bibitem[Liu et~al.(2020)Liu, He, Xu, and Zhang]{EBI}
Minghuan Liu, Tairan He, Minkai Xu, and Weinan Zhang.
\newblock Energy-based imitation learning.
\newblock \emph{arXiv preprint arXiv:2004.09395}, 33, 2020.

\bibitem[Lu et~al.(2022)Lu, Zhou, Bao, Chen, Li, and Zhu]{lu2022dpm}
Cheng Lu, Yuhao Zhou, Fan Bao, Jianfei Chen, Chongxuan Li, and Jun Zhu.
\newblock Dpm-solver: A fast ode solver for diffusion probabilistic model
  sampling in around 10 steps.
\newblock \emph{arXiv preprint arXiv:2206.00927}, 2022.

\bibitem[Lucas et~al.(2019)Lucas, Tucker, Grosse, and
  Norouzi]{lucas2019understanding}
James Lucas, George Tucker, Roger Grosse, and Mohammad Norouzi.
\newblock Understanding posterior collapse in generative latent variable
  models, 2019.
\newblock URL \url{https://openreview.net/forum?id=r1xaVLUYuE}.

\bibitem[Ma et~al.(2022)Ma, Yang, Hu, Yang, Zhang, Zhao, Liang, and Liu]{vem}
Xiaoteng Ma, Yiqin Yang, Hao Hu, Jun Yang, Chongjie Zhang, Qianchuan Zhao, Bin
  Liang, and Qihan Liu.
\newblock Offline reinforcement learning with value-based episodic memory.
\newblock In \emph{International Conference on Learning Representations}, 2022.

\bibitem[Nair et~al.(2020)Nair, Gupta, Dalal, and Levine]{awac}
Ashvin Nair, Abhishek Gupta, Murtaza Dalal, and Sergey Levine.
\newblock Awac: Accelerating online reinforcement learning with offline
  datasets.
\newblock \emph{arXiv preprint arXiv:2006.09359}, 2020.

\bibitem[Peng et~al.(2019)Peng, Kumar, Zhang, and Levine]{awr}
Xue~Bin Peng, Aviral Kumar, Grace Zhang, and Sergey Levine.
\newblock Advantage-weighted regression: Simple and scalable off-policy
  reinforcement learning.
\newblock \emph{arXiv preprint arXiv:1910.00177}, 2019.

\bibitem[Peters et~al.(2010)Peters, Mulling, and Altun]{rwr}
Jan Peters, Katharina Mulling, and Yasemin Altun.
\newblock Relative entropy policy search.
\newblock In \emph{Twenty-Fourth AAAI Conference on Artificial Intelligence},
  2010.

\bibitem[Ramesh et~al.(2022)Ramesh, Dhariwal, Nichol, Chu, and Chen]{dalle2}
Aditya Ramesh, Prafulla Dhariwal, Alex Nichol, Casey Chu, and Mark Chen.
\newblock Hierarchical text-conditional image generation with clip latents.
\newblock \emph{arXiv preprint arXiv:2204.06125}, 2022.

\bibitem[Saharia et~al.(2022)Saharia, Chan, Saxena, Li, Whang, Denton,
  Ghasemipour, Ayan, Mahdavi, Lopes, et~al.]{imagen}
Chitwan Saharia, William Chan, Saurabh Saxena, Lala Li, Jay Whang, Emily
  Denton, Seyed Kamyar~Seyed Ghasemipour, Burcu~Karagol Ayan, S~Sara Mahdavi,
  Rapha~Gontijo Lopes, et~al.
\newblock Photorealistic text-to-image diffusion models with deep language
  understanding.
\newblock \emph{arXiv preprint arXiv:2205.11487}, 2022.

\bibitem[Silver et~al.(2014)Silver, Lever, Heess, Degris, Wierstra, and
  Riedmiller]{dpg}
David Silver, Guy Lever, Nicolas Heess, Thomas Degris, Daan Wierstra, and
  Martin~A. Riedmiller.
\newblock Deterministic policy gradient algorithms.
\newblock In \emph{Proceedings of the 31th International Conference on Machine
  Learning, {ICML} 2014, Beijing, China, 21-26 June 2014}, pp.\  387--395,
  2014.

\bibitem[Singh et~al.(2020)Singh, Liu, Zhou, Yu, Rhinehart, and Levine]{Parrot}
Avi Singh, Huihan Liu, Gaoyue Zhou, Albert Yu, Nicholas Rhinehart, and Sergey
  Levine.
\newblock Parrot: Data-driven behavioral priors for reinforcement learning.
\newblock \emph{arXiv preprint arXiv:2011.10024}, 2020.

\bibitem[Sohl-Dickstein et~al.(2015)Sohl-Dickstein, Weiss, Maheswaranathan, and
  Ganguli]{sohl2015deep}
Jascha Sohl-Dickstein, Eric Weiss, Niru Maheswaranathan, and Surya Ganguli.
\newblock Deep unsupervised learning using nonequilibrium thermodynamics.
\newblock In \emph{International Conference on Machine Learning}, pp.\
  2256--2265. PMLR, 2015.

\bibitem[Song et~al.(2021{\natexlab{a}})Song, Meng, and
  Ermon]{song2020denoising}
Jiaming Song, Chenlin Meng, and Stefano Ermon.
\newblock Denoising diffusion implicit models.
\newblock In \emph{International Conference on Learning Representations},
  2021{\natexlab{a}}.

\bibitem[Song et~al.(2021{\natexlab{b}})Song, Sohl-Dickstein, Kingma, Kumar,
  Ermon, and Poole]{sde}
Yang Song, Jascha Sohl-Dickstein, Diederik~P Kingma, Abhishek Kumar, Stefano
  Ermon, and Ben Poole.
\newblock Score-based generative modeling through stochastic differential
  equations.
\newblock In \emph{International Conference on Learning Representations},
  2021{\natexlab{b}}.
\newblock URL \url{https://openreview.net/forum?id=PxTIG12RRHS}.

\bibitem[Sutton \& Barto(1998)Sutton and Barto]{rlbook}
Richard~S. Sutton and Andrew~G. Barto.
\newblock \emph{Introduction to Reinforcement Learning}.
\newblock MIT Press, Cambridge, MA, USA, 1st edition, 1998.
\newblock ISBN 0262193981.

\bibitem[Sutton et~al.(1999)Sutton, McAllester, Singh, and Mansour]{PG}
Richard~S Sutton, David McAllester, Satinder Singh, and Yishay Mansour.
\newblock Policy gradient methods for reinforcement learning with function
  approximation.
\newblock \emph{Advances in neural information processing systems}, 12, 1999.

\bibitem[Wang et~al.(2022)Wang, Hunt, and Zhou]{diffusionQL}
Zhendong Wang, Jonathan~J Hunt, and Mingyuan Zhou.
\newblock Diffusion policies as an expressive policy class for offline
  reinforcement learning.
\newblock \emph{arXiv preprint arXiv:2208.06193}, 2022.

\bibitem[Wang et~al.(2020)Wang, Novikov, Zolna, Merel, Springenberg, Reed,
  Shahriari, Siegel, Gulcehre, Heess, and de~Freitas]{crr}
Ziyu Wang, Alexander Novikov, Konrad Zolna, Josh~S Merel, Jost~Tobias
  Springenberg, Scott~E Reed, Bobak Shahriari, Noah Siegel, Caglar Gulcehre,
  Nicolas Heess, and Nando de~Freitas.
\newblock Critic regularized regression.
\newblock In H.~Larochelle, M.~Ranzato, R.~Hadsell, M.F. Balcan, and H.~Lin
  (eds.), \emph{Advances in Neural Information Processing Systems}, volume~33,
  pp.\  7768--7778. Curran Associates, Inc., 2020.

\bibitem[Weng et~al.(2022)Weng, Chen, Yan, You, Duburcq, Zhang, Su, Su, and
  Zhu]{tianshou}
Jiayi Weng, Huayu Chen, Dong Yan, Kaichao You, Alexis Duburcq, Minghao Zhang,
  Yi~Su, Hang Su, and Jun Zhu.
\newblock Tianshou: A highly modularized deep reinforcement learning library.
\newblock \emph{Journal of Machine Learning Research}, 23\penalty0
  (267):\penalty0 1--6, 2022.
\newblock URL \url{http://jmlr.org/papers/v23/21-1127.html}.

\bibitem[Wu et~al.(2019)Wu, Tucker, and Nachum]{brac}
Yifan Wu, George Tucker, and Ofir Nachum.
\newblock Behavior regularized offline reinforcement learning.
\newblock \emph{arXiv preprint arXiv:1911.11361}, 2019.

\bibitem[Zhou et~al.(2021)Zhou, Bajracharya, and Held]{plas}
Wenxuan Zhou, Sujay Bajracharya, and David Held.
\newblock Plas: Latent action space for offline reinforcement learning.
\newblock In \emph{Conference on Robot Learning}, pp.\  1719--1735. PMLR, 2021.

\end{thebibliography}
\bibliographystyle{iclr2023_conference}

\newpage
\appendix
\section{Algorithm Overview}
\label{overview}
\begin{algorithm}
    \caption{Selecting from Behavior Candidates (Training)}
    \label{alg:SfBC}
    \begin{algorithmic}
        \STATE Initialize the score-based model $s_{\theta}$, the action evaluation model $Q_\phi$
        \STATE Calculate vanilla discounted returns $R_n^{(0)}$ for every state-action pair in dataset $\mathcal{D}^\mu$
        \STATE \small{\color{gray} \textit{// Training the behavior model}}
        \FOR {each gradient step}
        \STATE Sample $B$ data points $\left(\vs, \va\right)$ from $\mathcal{D}^\mu$, $B$ Gaussian noises $\epsilon$ from $\mathcal{N}(0,\bm{I})$ and $B$ time $t$ from $\mathcal{U}(0, T)$
        \STATE Perturb $\va$ according to $\va_t := \alpha_t \va + \sigma_t \bm{\epsilon}$
        \STATE Update $\theta \leftarrow \lambda_s\nabla_{\theta} \sum[\| \sigma_t \mathbf{s}_\theta(\va_t, \vs, t) + \bm{\epsilon} \|_2^2]$
        \ENDFOR
        \STATE \small{\color{gray} \textit{// Training the action evaluation model iteratively}}
        \FOR {iteration $k=1$ \TO $K$}
        \STATE Initialize training parameters $\phi$ of the action evaluation model $Q_\phi$
        \FOR {each gradient step}
        \STATE Sample $B$ data points $\left(\vs, \va, R^{(k-1)}\right)$ from $\mathcal{D}^\mu$
        \STATE Update $\phi \leftarrow \phi - \lambda_Q\nabla_{\phi} \sum[\| Q_\phi(\vs, \va) - R^{(k-1)} \|_2^2]$
        \ENDFOR
        
        \STATE \small{\color{gray} \textit{// Update the Q-training targets as in Algorithm \ref{alg:Target}}}
        \STATE $R^{(k)} = \text{Planning}(\mathcal{D}^\mu, \mu_\theta, Q_\phi)$
        \ENDFOR
    \end{algorithmic}
\end{algorithm}

\begin{algorithm}
    \caption{Implicit In-sample Planning}
    \label{alg:Target}
    \begin{algorithmic}
        \STATE Input a behavior dataset $\mathcal{D}^\mu$ (sequentially ordered), a learned behavior policy $\mu_\theta$, a critic model $Q_\phi$

        \STATE \small{\color{gray} \textit{// Evaluate every state in dataset according to \Eqref{Eq:planning:2} with $M$ Monte Carlo samples (parallelized)}}
        \FOR {each minibatch $\{\vs_n\}$ splitted from $\mathcal{D}^\mu$}
        \STATE Sample M actions $\hat{\va}^{1:M}_n$ from $\mu_\theta(\cdot|\vs_n)$, and calculate Q-values $\hat{R}^{1:M}_{n} = Q_\phi(\vs_n, \hat{\va}^{1:M}_n)$
        \STATE Calculate state value $V_{n} = \sum_m \bigg[\mathrm{exp}\left(\alpha\hat{R}^{m}_{n} \right)\hat{R}^{m}_{n}\bigg]/\sum_m \mathrm{exp}\left(\alpha\hat{R}^{m}_{n} \right)$  
        \ENDFOR

        \STATE \small{\color{gray} \textit{// Performing implicit in-sample planning recursively}}
        \FOR {timestep $n=\|\mathcal{D}^\mu\|$ \TO $0$}
        \STATE $R_{n} = r_n + \gamma\max(R_{n+1}, V_{n+1})$ if $n$ is not the last episode step, else $r_n$
        \ENDFOR

        \STATE Output the new Q-training targets $\{R_n\}$
    \end{algorithmic}
\end{algorithm}

\begin{figure*} [h] \centering
\includegraphics[width=1.00\textwidth]{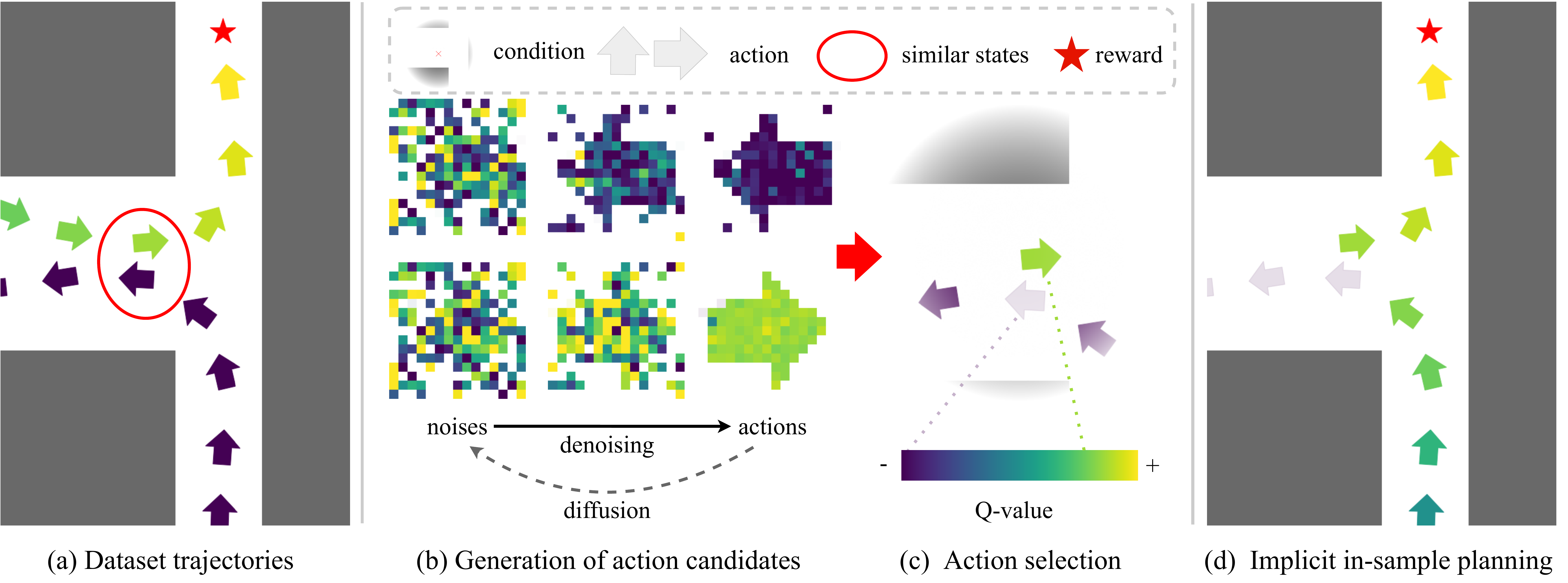}
\caption{
An algorithm overview of SfBC.
}
\label{fig:main_fig}
\end{figure*}
\newpage

\section{Experimental details}
\label{details}
\subsection{Implementation Details of SfBC}
\textbf{Network Architecture.} SfBC includes a conditional scored-based model which estimates the score function of the behavior action distribution, and an action evaluation model which outputs the Q-values of given state-action pairs. The architecture of the behavior model resembles U-Nets, but with spatial convolutions changed to simple dense connections, inspired by \citet{diffuser}. For the action evaluation model, we use a 2-layer MLP with 256 hidden units and SiLU activation functions. The same network architecture is applied across all tasks except for AntMaze-Large, where we add an extra layer of 512 hidden units for the action evaluation model.

\begin{figure} [h] \centering
\includegraphics[width=0.85\textwidth]{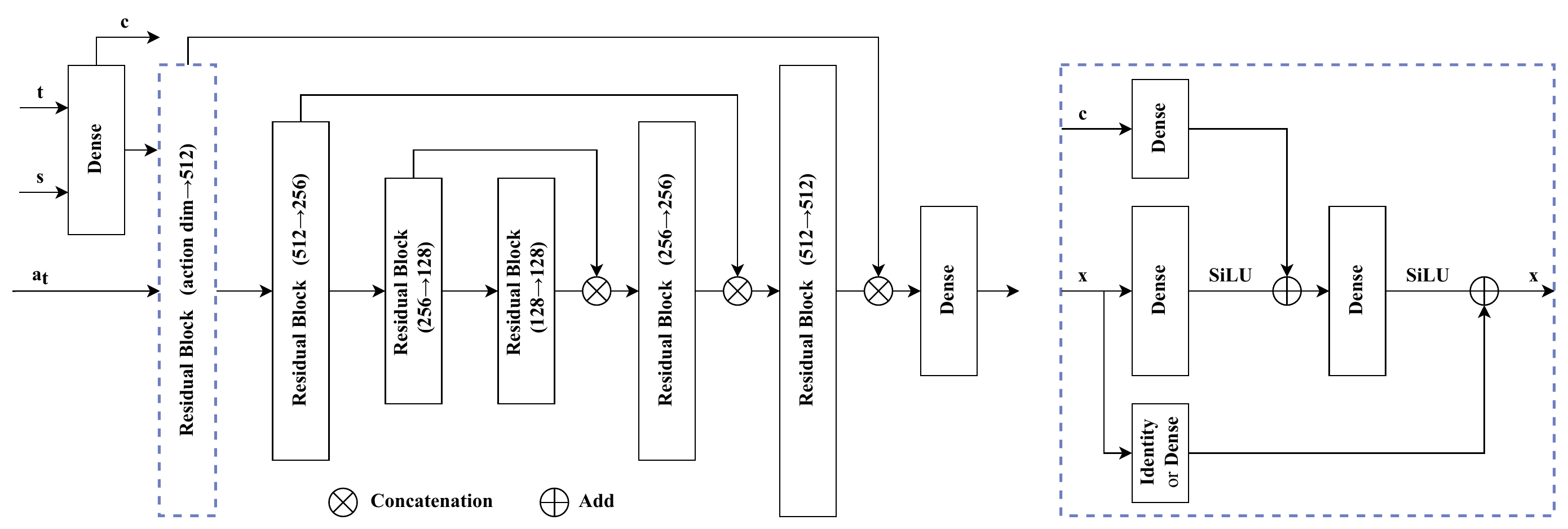}
\caption{
The network architecture of the behavior model.
}
\label{fig:architecture}
\end{figure}

\textbf{Behavior training.} In all experiments, we use the Adam optimizer and a batch size of 4096. The conditional scored-based model is trained for 500 data epochs with a learning rate of 1e-4. For the data perturbation method, we use Variance Preserving (VP) SDE as introduced in \citet{sde}, where we have $\mathrm{d} \vx=-\frac{1}{2} \beta(t) \vx \mathrm{d} t+\sqrt{\beta(t)} \mathrm{d} \mathbf{w}$, such that we have $f(\vx,t)=-\frac{1}{2} \beta(t) \vx$ and $g(t)=\sqrt{\beta(t)}$ in \Eqref{Eq:prob_ode}, and also:
\begin{equation}
p_{t0}(\vx_t|\vx_0)=\mathcal{N}(\vx_t|\alpha_t\vx_0, \sigma_t^2\bm{I}) = \mathcal{N}(\vx_t|e^{-\frac{1}{2} \int_0^t \beta(s) \mathrm{d} s}\vx_0, [1-e^{-\int_0^t \beta(s) \mathrm{d} s}]\bm{I}).
\end{equation}

Following the default settings in \citet{sde}, we set $\beta(t)=\left(\beta_{\max }-\beta_{\min }\right)t+\beta_{\min}$, with $\beta_{\min}$ being 0.1 and $\beta_{\max}$ being 20.

\textbf{Action evaluation via in-sample planning.} The action evaluation model is trained for 100 data epochs with a learning rate of 1e-3 for each value iteration. We use $K=2$ value iterations for all MuJoCo tasks, $K=4$ for Antmaze-umaze tasks, and $K=5$ for other Antmaze tasks. In each iteration, new Q-targets will be recalculated according to \Eqref{Eq:planning:1} and \Eqref{Eq:planning:2} based on the latest policy. We use Monte Carlo methods and importance sampling to estimate $V_n^{(k-1)}$ in \Eqref{Eq:planning:2}:
\begin{align}
    \nonumber
    V_n^{(k-1)} =&\mathbb{E}_{\va \sim \pi(\cdot|\vs_n)} Q_\phi(\vs_n, \va) \\
    \nonumber
    =&\mathbb{E}_{\va \sim \mu_\theta(\cdot|\vs_n)} \frac{\mathrm{exp}\left(\alpha Q_\phi(\vs_n, \va) \right)}{Z(\vs_n)} Q_\phi(\vs_n, \va)\\
    \approx	&\sum_M \bigg[\frac{\mathrm{exp}\left(\alpha Q_\phi(\vs_n,\va) \right)}{\sum_M \mathrm{exp}\left(\alpha Q_\phi(\vs_n,\va) \right)}Q_\phi(\vs_n,\va)\bigg],
\end{align}
with the inverse temperature $\alpha$ set to 20 and the Monte Carlo sample number set to 16 in all tasks. Note that at the beginning of each value iteration, we normalize Q-targets stored in the dataset and reinitialize the training parameters of the action evaluation model. Different from most prior works \citep{bcq, cql,vem, iql}, we do not use either ensembled networks or target networks to stabilize Q-learning.

\textbf{Diffusion sampling.} To draw action samples from the behavior model, we use a 3rd-order specialized diffusion ODE solver proposed by \citet{lu2022dpm} to solve the inverse ODE problem in \Eqref{Eq:prob_ode}. We use a diffusion step of $D=15$ for all reported results in Table \ref{tbl:results}, which is significantly less than the typical 35-50 diffusion steps required if using ordinary \textbf{RK45} ODE solver \citep{RKmethod}. We also compare the performance and runtime of diffusion steps in the range $\{5, 10, 15, 25\}$ with the results reported in Table \ref{tbl:runtimetable}. Generally, we find that 10-25 diffusion steps perform similarly well in MuJoco Locomotion tasks and 15-25 diffusion steps perform similarly well in Antmaze tasks.

\textbf{Evaluation.} Following the evaluation metric proposed by \citet{d4rl}, we run all Antmaze and MuJoCo experiments over 4 trials (different random seeds) and other experiments over 3 trials. For each trial, performance is averaged on another 100 test seeds for Antmaze tasks and 20 test seeds for other tasks at regular intervals (5 data epochs). During algorithm evaluation, we select actions in a deterministic way. Specifically, the action with the highest Q-value within $M$ behavior candidates will be selected for environment inference during evaluation. In MuJoCo Locomotion tasks, we average four actions with the highest Q-values among all candidates and find this technique helps to stabilize performance. We set the candidate number $M$ to 32 in all experiments.

\textbf{Runtime.}
We test the runtime of our algorithm on a RTX 2080Ti GPU. For algorithm training, the runtime cost of training the behavior model is 10.5 hours for 600 epochs, and the runtime cost of training the action evaluation model is about 31 minutes for each value iteration, (usually 2-5 iterations, 1M data points considered). For a concrete example, it roughly takes 155 minutes to train the action evaluation model (K=5) and 10.5 hours to train the behavior model for the ``halfcheetah-medium'' task.

As for the evaluation runtime, theoretically, SfBC requires at least $D$ times of network inference time compared with non-diffusion methods ($D=1$), $D$ being the diffusion steps. To accelerate algorithm evaluation,  we implement a parallel evaluation scheme similar to \citet{acce, tianshou} that could allow evaluating the algorithm under multiple test seeds at the same time, allowing us to significantly reduce the evaluation runtime by utilizing the parallel computing power of GPUs (Figure \ref{fig:runtime}).

\begin{table*}[h]
\centering
\small
\begin{tabular}{lccccc}
\toprule
\multicolumn{1}{c}{\bf Diffusion Steps $D$} & \multicolumn{1}{c}{\bf 5 steps} & \multicolumn{1}{c}{\bf 10 steps }& \multicolumn{1}{c}{\bf 15 steps}  & \multicolumn{1}{c}{\bf 25 steps} \\
\midrule
Performance (Locomotion) &  $2.3$   & $72.9$  &  $75.6 $        &  $74.4$       \\
Performance (Antmaze) &$5.5$   & $65.7$  &  $74.2 $        &  $73.0$         \\
\midrule
Runtime (1 episode, \# envs=1)   & 22.3 s & 38.0 s          & 50.0 s & 93.0 s \\
Runtime (1 episode, \# envs=20)  & 1.5 s & 2.5 s          & \bf{3.2} s & 5.0 s  \\
\bottomrule
\end{tabular}%

\caption{Ablation studies of the diffusion steps. The runtime is reported for the "halfcheetah-medium" task on a RTX 2080Ti GPU. 1 episode stands for 1000 environment steps.}
\label{tbl:runtimetable}
\end{table*}

\begin{figure} [h] \centering
\includegraphics[width=0.6\textwidth]{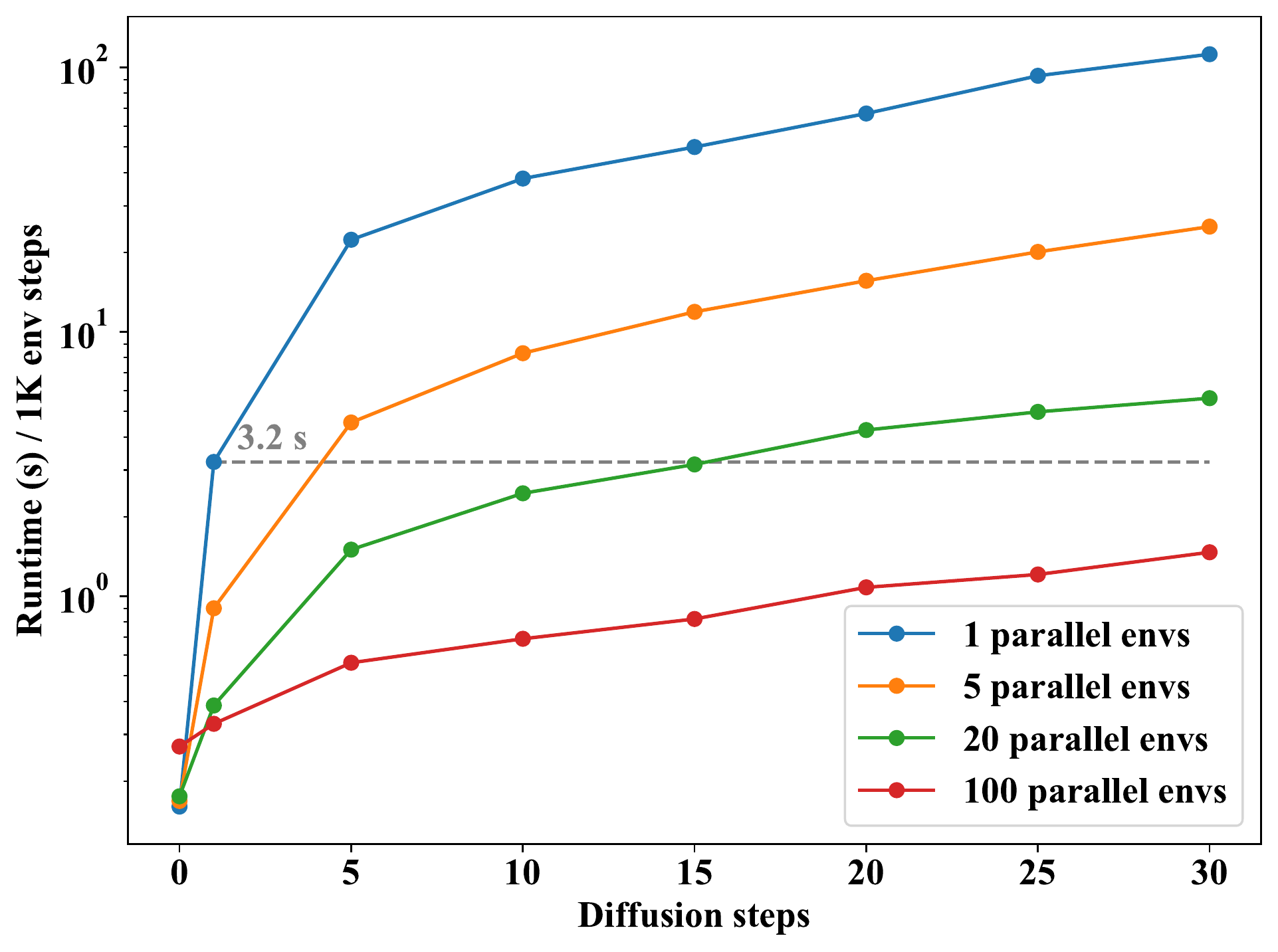}
\caption{
Evaluation runtime of SfBC.
}
\label{fig:runtime}
\end{figure}

\subsection{Implementation Details for Ablation Studies}
\label{abladetail}
\textbf{SfBC + VAEs/Gaussians}. For the VAE-based behavior model, we use exactly the same network architecture and training loss as \citet{bcq} and train the behavior model for 300k iterations at a learning rate of 3e-4. For the Gaussian-based policy model, we follow \citet{awac} and use a 4-layer MLP with 256 hidden units and ReLU activation functions. The sampled action from the parameterized Gaussian distribution is squashed to range $[-1, 1]$ by a Tanh activation function. The Gaussian behavior model is trained by directly maximizing the log-likelihood of the dataset distribution for 300k iterations at a learning rate of 3e-4. Other experiment settings are consistent with the diffusion-based method.

\textbf{SfBC - Planning}. Removing the planning-based procedure from SfBC is equivalent to performing SfBC for only one iteration, which only learns a behavior Q-function purely from vanilla returns. Other than this, we use the same network architecture and training paradigm as SfBC.

\subsection{Sources of Referenced Baseline Numbers}
For IQL \citep{iql}, D4RL performance numbers are reported in its original paper, except for Maze2d tasks, which we reference \citet{diffuser}. The performance in the Bidirectional-Car task is based on a PyTorch reimplementation of the algorithm (\url{https://github.com/gwthomas/IQL-PyTorch}). We use the same hyperparameters as in the original paper for MuJoCo Locomotion tasks.

For VEM \citep{vem} and Diffuser \citep{diffuser}, all D4RL performance numbers come from their respective papers. Performance numbers of VEM in the Bidirectional-Car task are based on a slightly modified version of the algorithm's official codebase (\url{https://github.com/YiqinYang/VEM}). Since the performance of VEM is very sensitive to a hyperparameter $\tau$ in their algorithm. We evaluate $\tau \in \{0.1, 0.2, ..., 0.9\}$ and report the best-performing choice.

For AWR \citep{awr}, BCQ \citep{bcq} and CQL \citep{cql}, all their D4RL performance numbers come from \citet{d4rl}. Their performance numbers in the Bidirectional-Car task are based on three independent codebases: \url{https://github.com/Farama-Foundation/D4RL-Evaluations} for AWR, \url{https://github.com/sfujim/BCQ} for BCQ and \url{https://github.com/young-geng/CQL} for CQL. We mostly use the default hyperparameters in their respective codebases.

For BAIL \citep{bail}, all reported performance numbers come from our experiments based on a slightly modified version of its official codebase (\url{https://github.com/lanyavik/BAIL}). Note that BAIL proposes a technique to replace oracle returns with augmented returns in MuJoCo Locomotion tasks, whereas we omit using this technique because it cannot be easily applied to other offline tasks. Other than this, we use default settings in the original codebase.

For DT \citep{dt}, D4RL performance numbers are reported in DT's paper, except for AntMaze tasks, which we reference \citet{iql}. The performance numbers in the Bidirectional-Car task are based on the algorithm's official codebase (\url{https://github.com/kzl/decision-transformer}). We use the same hyperparameters as they did for MuJoCo Locomotion tasks.

\newpage

\newpage

\section{Theoretical Analysis}
\label{analysis}
In this section, we provide some theoretical analysis of our planning-based operator:
\begin{equation}
    \mathcal{T}_\mu^\pi Q(\vs,\va) := \max_{n \geq 0}\{(\mathcal{T}^{\mu})^{n}\mathcal{T}^{\pi}Q(\vs, \va)\}.
\end{equation}
First, we provide the following proposition to discuss the  contraction property of $\mathcal{T}_\mu^\pi$ and the bound of its fixed point.
\begin{proposition}
\label{prop1}
We have the following properties of $\mathcal{T}_\mu^\pi$.

1) $\mathcal{T}_\mu^\pi$ owns monotonicity, i.e., for $\forall Q_1\leq Q_2$, we have $\mathcal{T}_\mu^\pi Q_1\leq \mathcal{T}_\mu^\pi Q_2$.

2) $\mathcal{T}_\mu^\pi$ is at least a $\gamma$-contraction. 

3) Assume the fixed point of $\mathcal{T}_\mu^\pi$ is $\tilde{Q}$, then we have $ Q^{\pi }(\vs, \va) \leq  \tilde{Q}(\vs, \va) \leq Q^{*}(\vs, \va)$ holds for $\forall \vs, \va$, here $Q^{\pi}, Q^{*}$ are the fixed points of $\mathcal{T}^*$ and $\mathcal{T}^{\pi}$ respectively.
\end{proposition}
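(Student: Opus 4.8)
The plan is to derive all three statements from two elementary facts about the one-step Bellman expectation operators appearing in the definition of $\mathcal{T}_\mu^\pi$: each of $\mathcal{T}^\pi$ and $\mathcal{T}^\mu$ is monotone (if $Q_1 \leq Q_2$ pointwise then $\mathcal{T}^\pi Q_1 \leq \mathcal{T}^\pi Q_2$, and similarly for $\mathcal{T}^\mu$), and each acts affinely on additive constants with factor $\gamma$, i.e. $\mathcal{T}^\pi(Q + c\vone) = \mathcal{T}^\pi Q + \gamma c\vone$ and likewise for $\mathcal{T}^\mu$; both are immediate from $\mathcal{T}^\pi Q(\vs,\va) = r(\vs,\va) + \gamma\,\mathbb{E}_{\vs',\va'}[Q(\vs',\va')]$, since conditional expectation is order-preserving and linear. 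I would also observe at the outset that the $\max_{n\geq 0}$ is a genuine (finite) maximum: since $\mathcal{T}^\mu$ is a $\gamma$-contraction with fixed point $Q^\mu$, the sequence $(\mathcal{T}^\mu)^n\mathcal{T}^\pi Q$ converges to $Q^\mu$ as $n\to\infty$, hence is bounded. Property (1) then follows at once: each composition $(\mathcal{T}^\mu)^n\mathcal{T}^\pi$ is monotone, so $Q_1\leq Q_2$ gives $(\mathcal{T}^\mu)^n\mathcal{T}^\pi Q_1 \leq (\mathcal{T}^\mu)^n\mathcal{T}^\pi Q_2$ for every $n$, and taking the pointwise maximum over $n$ preserves the inequality.

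For property (2), the cleanest argument uses the coordinatewise inequality $|\max_n a_n - \max_n b_n| \leq \max_n |a_n - b_n|$. Each $(\mathcal{T}^\mu)^n\mathcal{T}^\pi$ is a composition of $n+1$ $\gamma$-contractions in $\|\cdot\|_\infty$, hence a $\gamma^{n+1}$-contraction. Fixing a state-action pair and applying the displayed inequality to $a_n = (\mathcal{T}^\mu)^n\mathcal{T}^\pi Q_1$, $b_n = (\mathcal{T}^\mu)^n\mathcal{T}^\pi Q_2$ evaluated there, one gets $|\mathcal{T}_\mu^\pi Q_1 - \mathcal{T}_\mu^\pi Q_2| \leq \max_n \|(\mathcal{T}^\mu)^n\mathcal{T}^\pi Q_1 - (\mathcal{T}^\mu)^n\mathcal{T}^\pi Q_2\|_\infty \leq \big(\max_n \gamma^{n+1}\big)\|Q_1 - Q_2\|_\infty = \gamma\|Q_1 - Q_2\|_\infty$, using $\gamma^{n+1}\leq \gamma$ for $n\geq 0$. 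Taking the supremum over state-action pairs gives the $\gamma$-contraction (the phrase \emph{at least} in the statement allowing for a possibly smaller factor), and Banach's fixed point theorem yields the unique fixed point $\tilde Q$. An equivalent route sandwiches $Q_2 - c\vone \leq Q_1 \leq Q_2 + c\vone$ with $c = \|Q_1 - Q_2\|_\infty$ and combines monotonicity with the constant-shift identity $(\mathcal{T}^\mu)^n\mathcal{T}^\pi(Q\pm c\vone) = (\mathcal{T}^\mu)^n\mathcal{T}^\pi Q \pm \gamma^{n+1}c\vone$ together with $\max_n \gamma^{n+1} = \gamma$.

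For property (3) I would prove the two bounds separately. Lower bound: keeping only the $n=0$ term shows $\mathcal{T}_\mu^\pi Q \geq \mathcal{T}^\pi Q$ for all $Q$; applied to the fixed point, $\tilde Q = \mathcal{T}_\mu^\pi \tilde Q \geq \mathcal{T}^\pi \tilde Q$, and iterating the monotone operator $\mathcal{T}^\pi$ gives $\tilde Q \geq \mathcal{T}^\pi\tilde Q \geq (\mathcal{T}^\pi)^2\tilde Q \geq \cdots \to Q^\pi$, so $\tilde Q \geq Q^\pi$. Upper bound: the key is that the set $\{Q : Q \leq Q^*\}$ is invariant under $\mathcal{T}_\mu^\pi$. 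Indeed, for any policy $\nu$ one has $\mathcal{T}^\nu Q^* \leq \mathcal{T}^* Q^* = Q^*$ (averaging over actions is dominated by the max over actions), so if $Q \leq Q^*$ then monotonicity gives $\mathcal{T}^\mu Q \leq \mathcal{T}^\mu Q^* \leq Q^*$ and $\mathcal{T}^\pi Q \leq \mathcal{T}^\pi Q^* \leq Q^*$; hence every $(\mathcal{T}^\mu)^n\mathcal{T}^\pi Q \leq Q^*$ and so is their pointwise maximum. Iterating $Q_{k+1} = \mathcal{T}_\mu^\pi Q_k$ from any $Q_0 \leq Q^*$ (for instance $Q_0 = Q^\pi$) and using the contraction from (2) to pass to the limit gives $\tilde Q \leq Q^*$.

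The step I expect to be the main obstacle is handling the interaction between the outer $\max_{n}$ and the contraction / boundedness structure — in particular noticing that $\max_n \gamma^{n+1} = \gamma$ is what pins down the contraction factor in (2), and that the pointwise maximum of a family of functions each bounded above by $Q^*$ is itself bounded above by $Q^*$, which is exactly what makes the invariant-set argument for the upper bound work. The remaining ingredients (monotonicity of the Bellman operators, the constant-shift identities, Banach's theorem) are standard.
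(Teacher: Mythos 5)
Your proposal is correct and follows essentially the same route as the paper's proof: monotonicity of the compositions $(\mathcal{T}^\mu)^n\mathcal{T}^\pi$ plus the order-preservation of pointwise maxima for (1), the inequality $|\max_n a_n - \max_n b_n| \leq \max_n|a_n-b_n|$ combined with the $\gamma^{n+1}$-contraction of each composition for (2), and the sandwich obtained from $\mathcal{T}_\mu^\pi Q \geq \mathcal{T}^\pi Q$ and $\mathcal{T}^\nu Q^* \leq \mathcal{T}^* Q^* = Q^*$ for (3). The only cosmetic differences are that for the lower bound you iterate $\mathcal{T}^\pi$ downward from $\tilde Q$ whereas the paper iterates $\mathcal{T}_\mu^\pi$ upward from $Q^\pi$, and you phrase the upper bound as invariance of the set $\{Q \leq Q^*\}$ rather than iterating from $Q^*$ itself; both are equivalent.
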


\begin{proof}
$\\$
1) For $\forall Q_1\leq Q_2, \forall \vs,\va, \forall n\in\mathbb{N}$, we have
\begin{equation}
\begin{split}
    (\mathcal{T}^{\mu})^{n}\mathcal{T}^{\pi}Q_1(\vs, \va) \leq (\mathcal{T}^{\mu})^{n}\mathcal{T}^{\pi}Q_2(\vs, \va).\quad (\text{by monotonicity of $\mathcal{T}^\pi$ and $(\mathcal{T}^\mu)^n$})
\end{split}
\end{equation}
Thus we have
\begin{equation}
\begin{split}
    \mathcal{T}_\mu^\pi Q_1(\vs, \va) = \max_{n\ge 0}\{(\mathcal{T}^{\mu})^{n}\mathcal{T}^{\pi}Q_1(\vs, \va)\} \leq \max_{n\ge 0}\{(\mathcal{T}^{\mu})^{n}\mathcal{T}^{\pi}Q_2(\vs, \va)\} = \mathcal{T}_\mu^\pi Q_2(\vs, \va).
\end{split}
\end{equation}

2) For $\forall Q_1, Q_2, \forall \vs,\va$, we have
\begin{equation}
\begin{split}
    \left |\mathcal{T}_\mu^\pi Q_1(\vs,\va) - \mathcal{T}_\mu^\pi Q_2(\vs,\va)\right | 
    & = \left |\max_{n \geq 0}\{(\mathcal{T}^{\mu})^{n}\mathcal{T}^{\pi}Q_1(\vs, \va)\} - \max_{n \geq 0}\{(\mathcal{T}^{\mu})^{n}\mathcal{T}^{\pi}Q_2(\vs, \va)\}\right | \\
    & \leq \max_{n \geq 0}\{\left |(\mathcal{T}^{\mu})^{n}\mathcal{T}^{\pi}Q_1(\vs, \va) - (\mathcal{T}^{\mu})^{n}\mathcal{T}^{\pi}Q_2(\vs, \va)\right |\} \\
    & \leq \max_{n \geq 0}\{ \gamma^{n+1} \left\| Q_1 - Q_2\right\|_{\infty}\} \\
    & = \gamma \left\| Q_1 - Q_2\right\|_{\infty}.
\end{split}
\end{equation}
Consequently, $\mathcal{T}_\mu^\pi$ is at least a $\gamma$-contraction.

3) For $\forall \vs, \va$, we first prove $ Q^{\pi }(\vs, \va) \leq  \tilde{Q}(\vs, \va)$. For $\forall m\in\mathbb{N}$, we have 
\begin{equation}
\begin{split}
    Q^{\pi}(\vs,\va) &= \mathcal{T}^\pi Q^{\pi}(\vs,\va)
    \leq  \mathcal{T}_\mu^\pi Q^{\pi}(\vs, \va) 
    = \mathcal{T}_\mu^\pi \mathcal{T}^\pi Q^{\pi}(\vs, \va) \\
    &\leq  \mathcal{T}_\mu^\pi \mathcal{T}_\mu^\pi Q^{\pi}(\vs, \va) \quad (\text{by monotonicity of $\mathcal{T}_\mu^\pi$})\\
    &\leq ...
    \leq (\mathcal{T}_\mu^\pi)^m Q^{\pi}(\vs, \va),\\
    \text{Thus}\quad  Q^{\pi}(\vs,\va) & \leq \lim_{m\rightarrow\infty} (\mathcal{T}_\mu^\pi)^m Q^{\pi}(\vs,\va) = \tilde{Q}(\vs,\va).
\end{split}
\end{equation}

Now we prove that $\tilde{Q}(\vs, \va) \leq Q^{*}(\vs, \va)$. We have
\begin{equation}
\begin{split}
    \mathcal{T}^{\pi} Q^*(\vs,\va) &= \mathcal{R}(\vs, \va) + \gamma\mathbb{E}_{s'}\mathbb{E}_{a'\sim\hat{\pi}(\cdot|s')}Q^*(\vs' ,\va') \\
    &\leq \mathcal{R}(\vs, \va) + \gamma\mathbb{E}_{s'}\max_{a'}Q^*(\vs' ,\va') 
    =Q^*(\vs,\va).
\end{split}
\end{equation}
Similarly, we have $\mathcal{T}^{\mu} Q^*(\vs,\va) \leq Q^*(\vs,\va)$. 

Then for $\forall n,m\in\mathbb{N}$,
\begin{equation}
\begin{split}
    Q^{*}(\vs,\va) 
    &\ge \mathcal{T}^\mu Q^{*}(\vs, \va) 
    \ge (\mathcal{T}^\mu)^2 Q^{*}(\vs, \va) \quad (\text{by monotonicity of $\mathcal{T}^\mu$})\\
    &\ge ...
    \ge (\mathcal{T}^\mu)^n Q^{*}(\vs, \va) \\
    &\ge (\mathcal{T}^\mu)^n \mathcal{T}^\pi Q^{*}(\vs, \va), \qquad\qquad\qquad (\text{by monotonicity of $(\mathcal{T}^\mu)^n$})\\
    \text{Thus}\quad Q^{*}(\vs,\va) 
    &\ge \mathcal{T}_\mu^\pi Q^{*}(\vs, \va) \\
    &\ge \mathcal{T}_\mu^\pi \mathcal{T}_\mu^\pi Q^{*}(\vs, \va) \qquad\qquad \qquad \quad\ (\text{by monotonicity of $\mathcal{T}_\mu^\pi$})\\
    &\ge ...
    \ge (\mathcal{T}_\mu^\pi)^m Q^{*}(\vs, \va), \\
    \text{Thus}\quad Q^*(\vs, \va) &\ge \lim_{m\rightarrow\infty} (\mathcal{T}_{\mu}^{\pi})^m Q^*(\vs, \va) = \tilde{Q}(\vs,\va).
\end{split}
\end{equation}
\end{proof}
Moreover, similar to the analysis in~\cite{vem}, we provide the following proposition to show that at the beginning of the training when the current Q function estimates $Q(\vs, \va)$ is significantly pessimistic, our $\mathcal{T}_{\mu}^{\pi}$ provides a relatively optimistic update and can contract the estimation error more quickly.
\begin{proposition}
In practice, we consider $\mathcal{T}_\mu^\pi Q(\vs,\va) := \max_{0\leq n \leq N}\{(\mathcal{T}^{\mu})^{n}\mathcal{T}^{\pi}Q(\vs, \va)\}$. Then we have:
\begin{equation}
    |\mathcal{T}_\mu^\pi Q(\vs, \va) - Q^*(\vs, \va)| \leq \gamma^{n^*(\vs, \va)} \|Q - \tilde{Q}_{n^*}\|_{\infty} + \|\tilde{Q}_{n^*} - Q^*\|_{\infty},\quad \forall \vs,\va,
\end{equation}
here $n^*(\vs, \va) = \mathop{\arg\max}_{0\leq n\leq N} \{(\mathcal{T}^{\mu})^{n}\mathcal{T}^{\pi}Q(\vs, \va)\}$
and $\tilde{Q}_{n^*}$ is the fixed point of $(\mathcal{T}^{\mu})^{n^*(\vs, \va)}\mathcal{T}^{\pi}$.
\end{proposition}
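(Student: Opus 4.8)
The plan is to reduce the claim, at each fixed state-action pair, to one triangle inequality together with a single contraction estimate. The key observation is that once the maximizing index $n^* := n^*(\vs,\va)$ is frozen, the scalar $\mathcal{T}_\mu^\pi Q(\vs,\va)$ equals $\big((\mathcal{T}^\mu)^{n^*}\mathcal{T}^\pi Q\big)(\vs,\va)$, that is, the composite operator $\mathcal{G} := (\mathcal{T}^\mu)^{n^*}\mathcal{T}^\pi$ applied to $Q$ and then evaluated at $(\vs,\va)$. Since $\mathcal{T}^\pi$ is a $\gamma$-contraction and $(\mathcal{T}^\mu)^{n^*}$ a $\gamma^{n^*}$-contraction in $\|\cdot\|_\infty$ (the standard Bellman-operator facts already invoked in the proof of Proposition~\ref{prop1}), $\mathcal{G}$ is a $\gamma^{n^*+1}$-contraction, hence by Banach's fixed-point theorem it has a unique fixed point, which by definition is exactly $\tilde{Q}_{n^*}$.

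First I would fix $(\vs,\va)$, set $n^* = n^*(\vs,\va)$, and note that by the definition of $n^*$ as the maximizer we have $\mathcal{T}_\mu^\pi Q(\vs,\va) = \mathcal{G}Q(\vs,\va)$. Then I would insert $\tilde{Q}_{n^*}(\vs,\va)$ and apply the triangle inequality:
\begin{equation}
|\mathcal{T}_\mu^\pi Q(\vs,\va) - Q^*(\vs,\va)| \le |\mathcal{G}Q(\vs,\va) - \tilde{Q}_{n^*}(\vs,\va)| + |\tilde{Q}_{n^*}(\vs,\va) - Q^*(\vs,\va)|.
\end{equation}
The second summand is at most $\|\tilde{Q}_{n^*} - Q^*\|_\infty$ by definition of the sup norm, which is the second term of the claimed bound. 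For the first summand, since $\tilde{Q}_{n^*} = \mathcal{G}\tilde{Q}_{n^*}$, the contraction estimate for $\mathcal{G}$ gives $|\mathcal{G}Q(\vs,\va) - \mathcal{G}\tilde{Q}_{n^*}(\vs,\va)| \le \|\mathcal{G}Q - \mathcal{G}\tilde{Q}_{n^*}\|_\infty \le \gamma^{n^*+1}\|Q - \tilde{Q}_{n^*}\|_\infty \le \gamma^{n^*}\|Q - \tilde{Q}_{n^*}\|_\infty$, the last inequality using $\gamma \le 1$. Combining the two bounds and observing that $(\vs,\va)$ was arbitrary yields the stated inequality.

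The computation is essentially routine; the one point that needs care --- and the only thing I would flag as an obstacle --- is that $n^*$, and hence the operator $\mathcal{G}$ and its fixed point $\tilde{Q}_{n^*}$, depend on the particular pair $(\vs,\va)$ at which we evaluate. The identity $\mathcal{T}_\mu^\pi Q(\vs,\va) = \mathcal{G}Q(\vs,\va)$ holds only at that fixed pair (at other pairs the $\max$ over $0\le n\le N$ may be attained by a different index), which is precisely why the conclusion is stated pointwise with $(\vs,\va)$-dependent quantities rather than as a uniform operator inequality. It is also worth checking the edge case $n^* = 0$, in which $\mathcal{G} = \mathcal{T}^\pi$ and $\tilde{Q}_0 = Q^\pi$; the estimate above goes through unchanged.
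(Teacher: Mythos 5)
Your proposal is correct and follows essentially the same route as the paper's proof: fix $(\vs,\va)$, replace $\mathcal{T}_\mu^\pi Q(\vs,\va)$ by $(\mathcal{T}^\mu)^{n^*}\mathcal{T}^\pi Q(\vs,\va)$, insert the fixed point $\tilde{Q}_{n^*}$ via the triangle inequality, and apply the contraction property of the composite operator. Your explicit note that the contraction factor is really $\gamma^{n^*+1}\le\gamma^{n^*}$, and your remark on the pointwise $(\vs,\va)$-dependence of $n^*$ and $\tilde{Q}_{n^*}$, are small clarifications the paper leaves implicit, but they do not change the argument.
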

\begin{proof}
We can use the triangle inequality to prove this result 
\begin{equation}
\begin{split}
    & |\mathcal{T}_\mu^\pi Q(\vs, \va) - Q^*(\vs, \va)| \\
    = & |(\mathcal{T}^{\mu})^{n^*(\vs, \va)}\mathcal{T}^{\pi} Q(\vs, \va) - Q^*(\vs, \va)| \\
    \leq & |(\mathcal{T}^{\mu})^{n^*(\vs, \va)}\mathcal{T}^{\pi} Q(\vs, \va) - (\mathcal{T}^{\mu})^{n^*(\vs, \va)}\mathcal{T}^{\pi} \tilde{Q}_{n^*}(\vs, \va)| + |(\mathcal{T}^{\mu})^{n^*(\vs, \va)}\mathcal{T}^{\pi} \tilde{Q}_{n^*}(\vs, \va) - Q^*(\vs, \va)| \\
    = & |(\mathcal{T}^{\mu})^{n^*(\vs, \va)}\mathcal{T}^{\pi} Q(\vs, \va) - (\mathcal{T}^{\mu})^{n^*(\vs, \va)}\mathcal{T}^{\pi} \tilde{Q}_{n^*}(\vs, \va)| + | \tilde{Q}_{n^*}(\vs, \va) - Q^*(\vs, \va)| \\
    \leq & \gamma^{n^*(\vs, \va)} \|Q - \tilde{Q}_{n^*}\|_{\infty} + \|\tilde{Q}_{n^*} - Q^*\|_{\infty}.
\end{split}
\end{equation}
\end{proof}
When $Q$ is significantly lower than $\tilde{Q}_{n^*},Q^*$, $\|\tilde{Q}_{n^*} - Q^*\|_{\infty}$ is often conspicuously lower than $\|Q - \tilde{Q}_{n^*}\|_{\infty}$ and $n^*(\vs, \va)$ is relatively large (this often happens at the beginning of the training since the initial Q estimates are often near zero and thus pessimistic). At this time, based on this proposition, our operator $\mathcal{T}_{\mu}^{\pi}$, could contract the estimation error with a rate of around $\gamma^{n^*(\vs,\va)}$, which could significantly reduce extrapolation iterations required.

\newpage

\section{Missing Performance Numbers}
\label{missing}

\begin{table*}[h]
\centering
\small
\resizebox{1.0\textwidth}{!}{%
\begin{tabular}{llccccccccc}
\toprule
\multicolumn{1}{c}{\bf Dataset} & \multicolumn{1}{c}{\bf Environment} & \multicolumn{1}{c}{\bf SfBC (Ours)}& \multicolumn{1}{c}{\bf IQL}  & \multicolumn{1}{c}{\bf VEM} & \multicolumn{1}{c}{\bf AWR} & \multicolumn{1}{c}{\bf BAIL} & \multicolumn{1}{c}{\bf BCQ} & \multicolumn{1}{c}{\bf CQL}& \multicolumn{1}{c}{\bf DT} & \multicolumn{1}{c}{\bf Diffuser} \\
\midrule
Sparse        & Maze2d-umaze  &  $73.9 \pm 6.6$       &  $47.4$     & -            & $1.0$   & -           & $12.8$   & $5.7$      & -           & $\bf{113.9}$ \\
Sparse        & Maze2d-medium &  $73.8 \pm 2.9$       &  $34.9$     & -            & $7.6$   & -           & $8.3$    & $5.0$      & -           & $\bf{121.5}$ \\
Sparse        & Maze2d-large  &  $74.4 \pm 1.7$       &  $58.6$     & -            & $23.7$  & -           & $6.2$    & $12.5$     & -           & $\bf{123.0}$ \\
\midrule
\multicolumn{2}{c}{\bf Average (Maze2d)}&  $74.0$              &  $50.0$     & -            & $10.8$  & -           & $9.1$    &   $7.7$   & -           & $\bf{119.5}$ \\
\specialrule{.05em}{.4ex}{.1ex}
\specialrule{.05em}{.1ex}{.65ex}
Complete      & FrankaKitchen  &  $\bf{77.9 \pm 0.6}$ &  $62.5$     & -          & $0.0$   & -           & $8.11$   & $43.8$    & -           & - \\
Partial       & FrankaKitchen  &  $\bf{47.9 \pm 4.1}$ & $\bf{46.3}$ & -          & $15.4$  & -           & $18.9$   &$\bf{49.8}$& -           & - \\
Mixed         & FrankaKitchen  &       $45.4 \pm 1.6$ &$\bf{51.0}$  & -          & $10.6$  & -           & $8.1$    &$\bf{51.0}$& -           & - \\
\midrule
\multicolumn{2}{c}{\bf Average (FrankaKitchen)}&  $\bf{57.1}$         &  $53.3$     & -          & $8.7$   & -           & $11.7$  &   $48.2$   & -           & - \\

\bottomrule
\end{tabular}
}
\caption{
Additional performance numbers of SfBC in Maze2d and FrankaKitchen tasks. We report the mean and standard deviation over three seeds for SfBC. Scores are
normalized according to \citet{d4rl}.}
\label{tbl:results_full}
\end{table*}

\begin{table*}[h]
\centering
\small
\resizebox{1.0\textwidth}{!}{%
\begin{tabular}{llccccc}
\toprule
\multicolumn{1}{c}{\bf Dataset} & \multicolumn{1}{c}{\bf Environment} & \multicolumn{1}{c}{\bf SfBC }& \multicolumn{1}{c}{\bf SfBC + Gaussian}  & \multicolumn{1}{c}{\bf SfBC + VAE} & \multicolumn{1}{c}{\bf SfBC - Planning} \\
\midrule
Medium-Expert & HalfCheetah    & $\bf{92.6 \pm 0.5}$  &  $79.4 \pm 1.4 $        &  $85.2 \pm 2.9 $     &  $\bf{91.4 \pm 0.6} $   \\
Medium-Expert & Hopper         & $\bf{108.6 \pm 2.1}$ &  $\bf{107.8 \pm 7.8}$   &  $92.0 \pm 7.3 $     &  $\bf{109.0 \pm 1.0} $   \\
Medium-Expert & Walker         & $\bf{109.8 \pm 0.2}$ & $71.5 \pm 1.5$          &$\bf{109.3 \pm 2.5}$  &  $\bf{109.4 \pm 0.9} $   \\
\midrule
Medium        & HalfCheetah    &  $\bf{45.9\pm 2.2}$ &  $\bf{42.0 \pm 0.2}$    &  $\bf{43.4 \pm 0.1} $&  $\bf{42.4 \pm 0.2} $   \\
Medium        & Hopper         &  $\bf{57.1 \pm 4.1}$ &  $58.1 \pm 1.5$    &  $\bf{65.6 \pm 3.3} $&  $60.1 \pm 4.2 $  \\
Medium        & Walker         &  $\bf{77.9 \pm 2.5}$  &  $\bf{82.4 \pm 1.1}$    &  $\bf{79.1 \pm 2.5} $&  $\bf{80.3 \pm 0.9} $ \\
\midrule
Medium-Replay & HalfCheetah    &  $37.1 \pm 1.7$       &  $36.2 \pm 1.2$   &  $\bf{42.4 \pm 0.5}$ &  $37.5 \pm 0.6 $  \\
Medium-Replay & Hopper         &  $\bf{86.2 \pm 9.1}$ &  $67.8 \pm 6.5$         &  $58.6 \pm 4.8 $     &  $58.6 \pm 1.3 $   \\
Medium-Replay & Walker         &  $\bf{65.1 \pm 5.6}$ &  $65.8 \pm 4.4$         &  $62.2 \pm 4.3 $     &  $62.6 \pm 2.2 $ \\
\midrule
\multicolumn{2}{c}{\bf Average}& $\bf{75.6}$ &  $67.9$         &  $70.9$     &  $\bf{72.3} $   \\
\specialrule{.05em}{.4ex}{.1ex}
\specialrule{.05em}{.1ex}{.65ex}
Default       & AntMaze-umaze  &  $\bf{92.0 \pm 2.1}$      &  $\bf{93.3 \pm 2.4}$         &  $91.6 \pm 2.4 $     &  $\bf{96.7 \pm 4.7} $   \\
Diverse       & AntMaze-umaze  &  $\bf{85.3 \pm 3.6}$ &  $\bf{88.3 \pm 2.4}$         &  $78.3 \pm 4.7 $     &  $80.0 \pm 10.8 $  \\
\midrule  
Play          & AntMaze-medium &  $\bf{81.3 \pm 2.6}$ &  $80.0 \pm 4.1$         &  $68.3 \pm 2.4 $     &  $35.0 \pm 4.1 $   \\
Diverse       & AntMaze-medium &  $\bf{82.0 \pm 3.1}$ &  $85.0 \pm 7.1$         &  $65.0 \pm 7.1 $     &  $33.3 \pm 6.2 $   \\
\midrule
Play          & AntMaze-large  &  $\bf{59.3 \pm 14.3}$ &  $43.3 \pm 7.1$         &  $35.0 \pm 8.2 $     &  $8.3 \pm 8.5 $  \\
Diverse       & AntMaze-large  &  $\bf{45.5 \pm 6.6}$ &  $26.7 \pm 8.5$         &  $20.0 \pm 0.0 $     &  $6.7 \pm 4.7 $  \\
\midrule
\multicolumn{2}{c}{\bf Average}&  $\bf{74.2}$ &  $69.4$         &  $59.7 $     &  $43.3 $   \\
\bottomrule
\end{tabular}
}
\caption{Ablations of generative modeling methods and the implicit planning method. We report the mean and standard deviation over four seeds for the main experiment and three seeds for other experiments. Scores are normalized according to \citet{d4rl}.}
\label{tbl:ablation_full}
\end{table*}

\section{Choices of Referenced Baselines}
\label{choice_baseline}
Referenced baselines methods of SfBC can be roughly divided into four categories: 1. Policy regression methods that require dynamic programming such as IQL \citep{iql} and VEM \citep{vem}. 2. Policy regression methods that use vanilla returns as regression weights such as AWR \citep{awr} and BAIL \citep{bail}. 3. Adaptations of existing off-policy algorithms with policy regularization such as BCQ \citep{bcq} and CQL \citep{cql}. 4. Sequence modeling methods such as DT \citep{dt} and Diffuser \citep{diffuser}. Here we further highlight several methods which bear some resemblance to our approach: Both IQL and SfBC aim to entirely avoid selecting out-of-sample actions, except that IQL uses weighted regression while SfBC does not; VEM also uses an implicit in-sample planning scheme similar to ours; BCQ also uses a generative model (VAE) for behavior modeling, but only to assist the learning of another policy model; Diffuser, like SfBC, is also a diffusion-based algorithm, but uses approximated guided sampling at trajectory level instead of importance sampling at step level.

\newpage
\section{Training Curves}

\begin{figure}[hbt!]
\centering
\includegraphics[width = .30\linewidth]{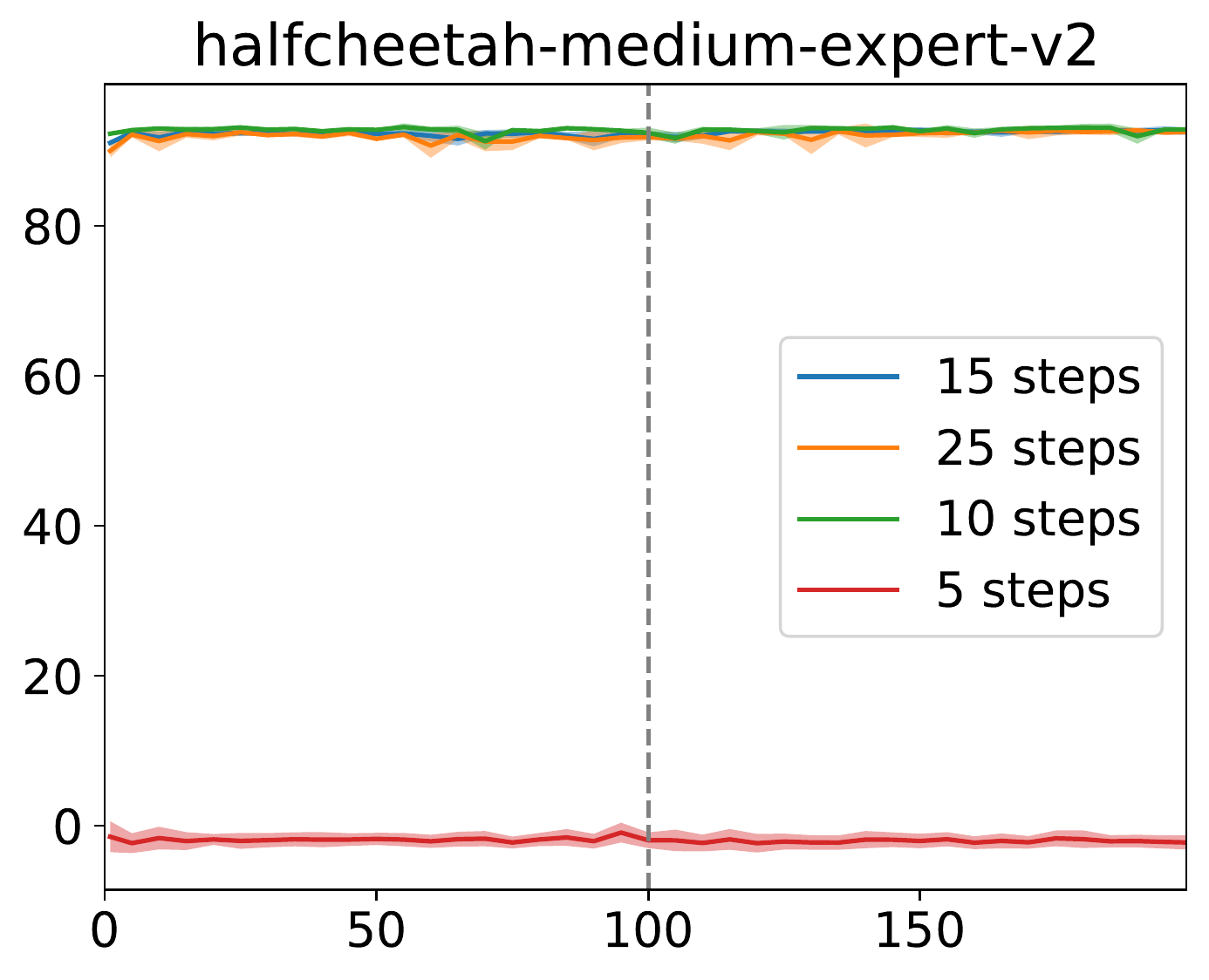}
\includegraphics[width = .30\linewidth]{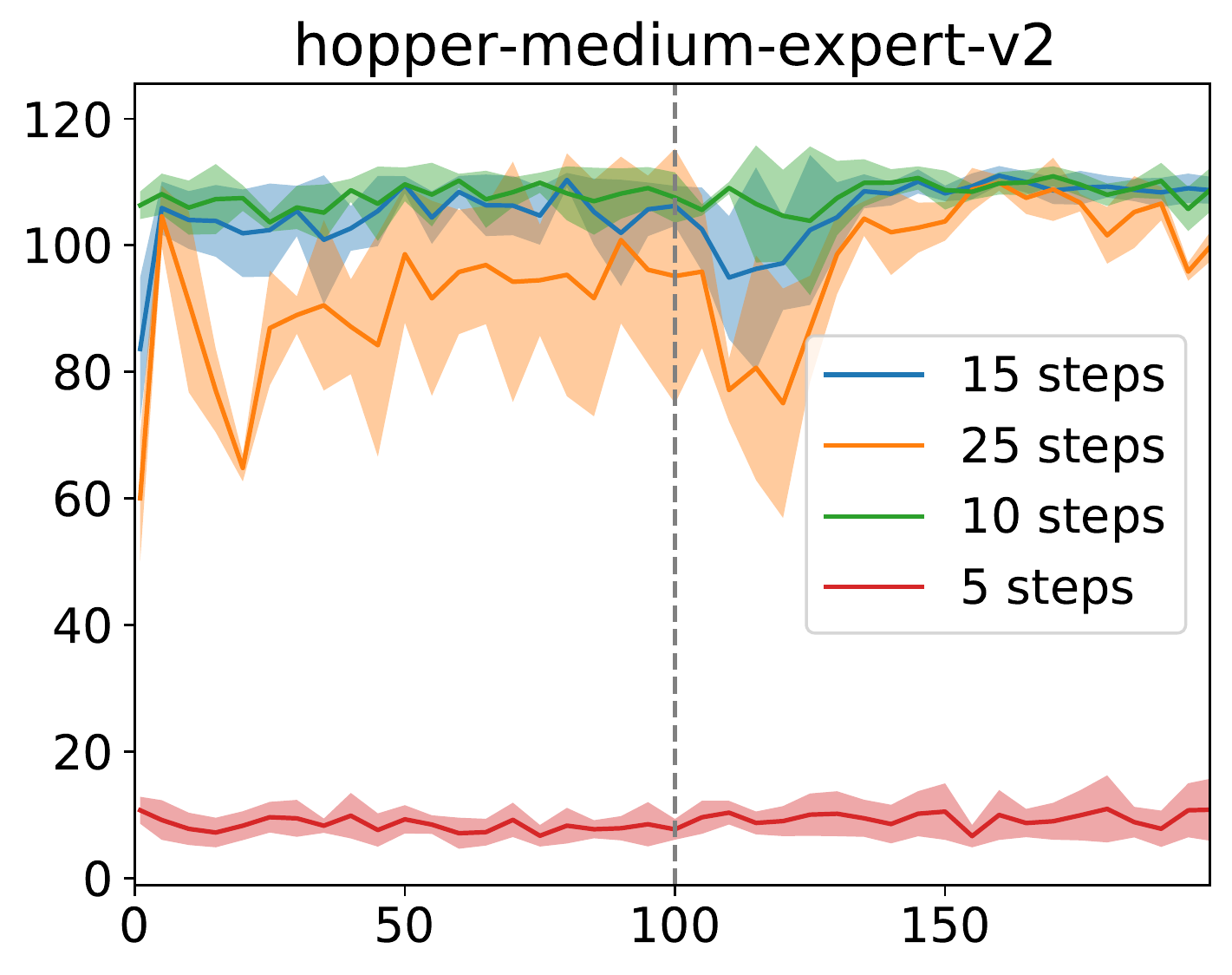}
\includegraphics[width = .30\linewidth]{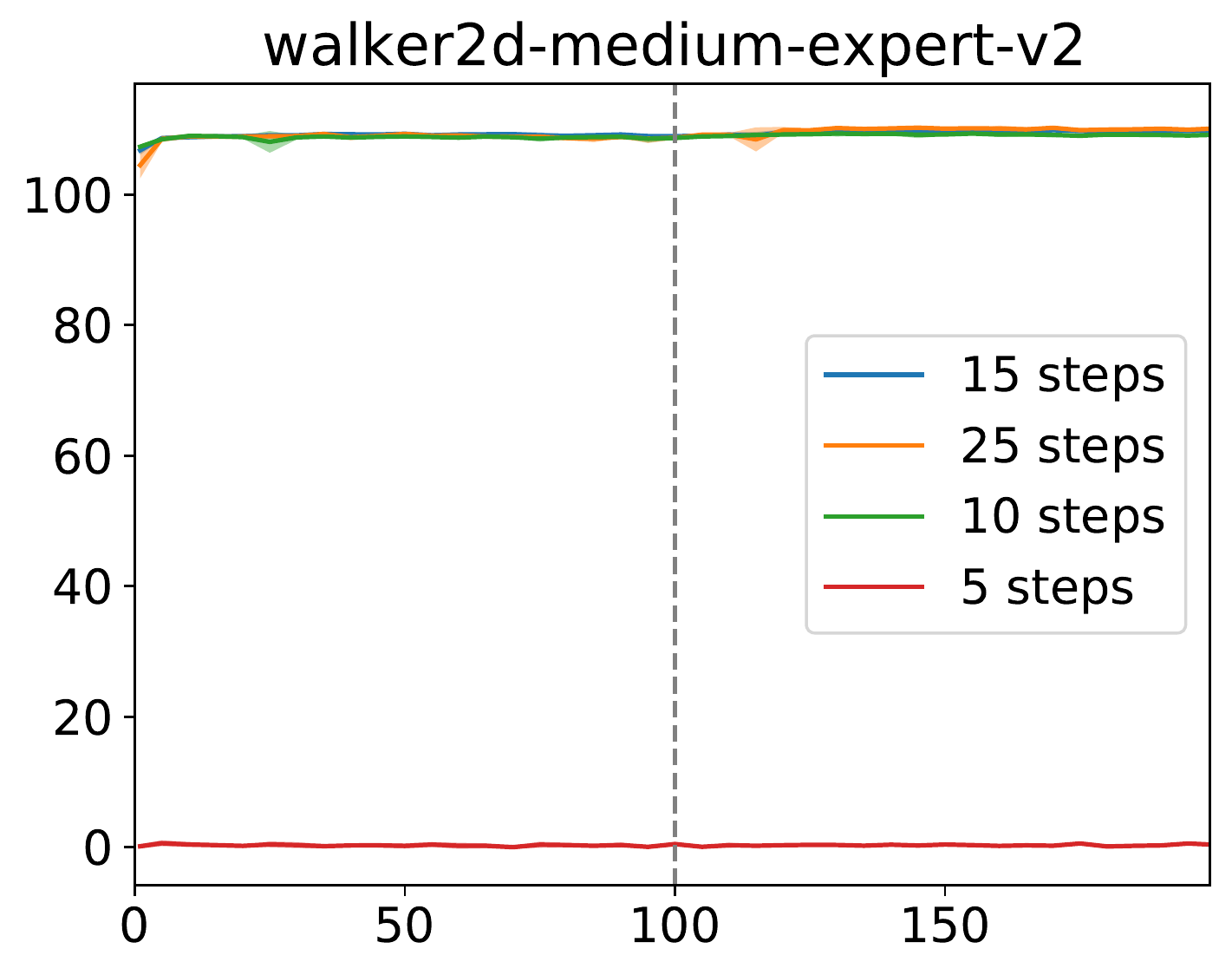}\\
\includegraphics[width = .30\linewidth]{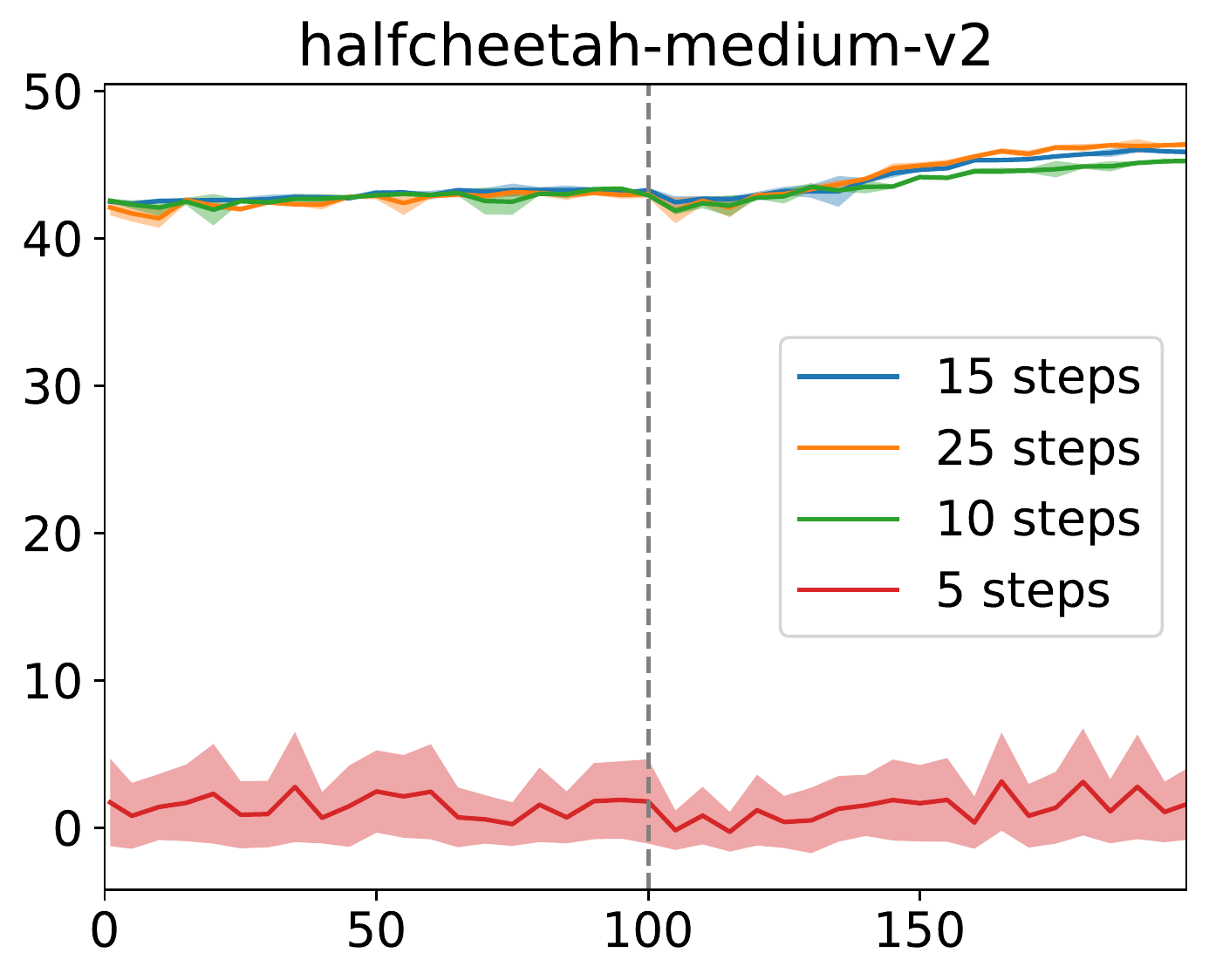}
\includegraphics[width = .30\linewidth]{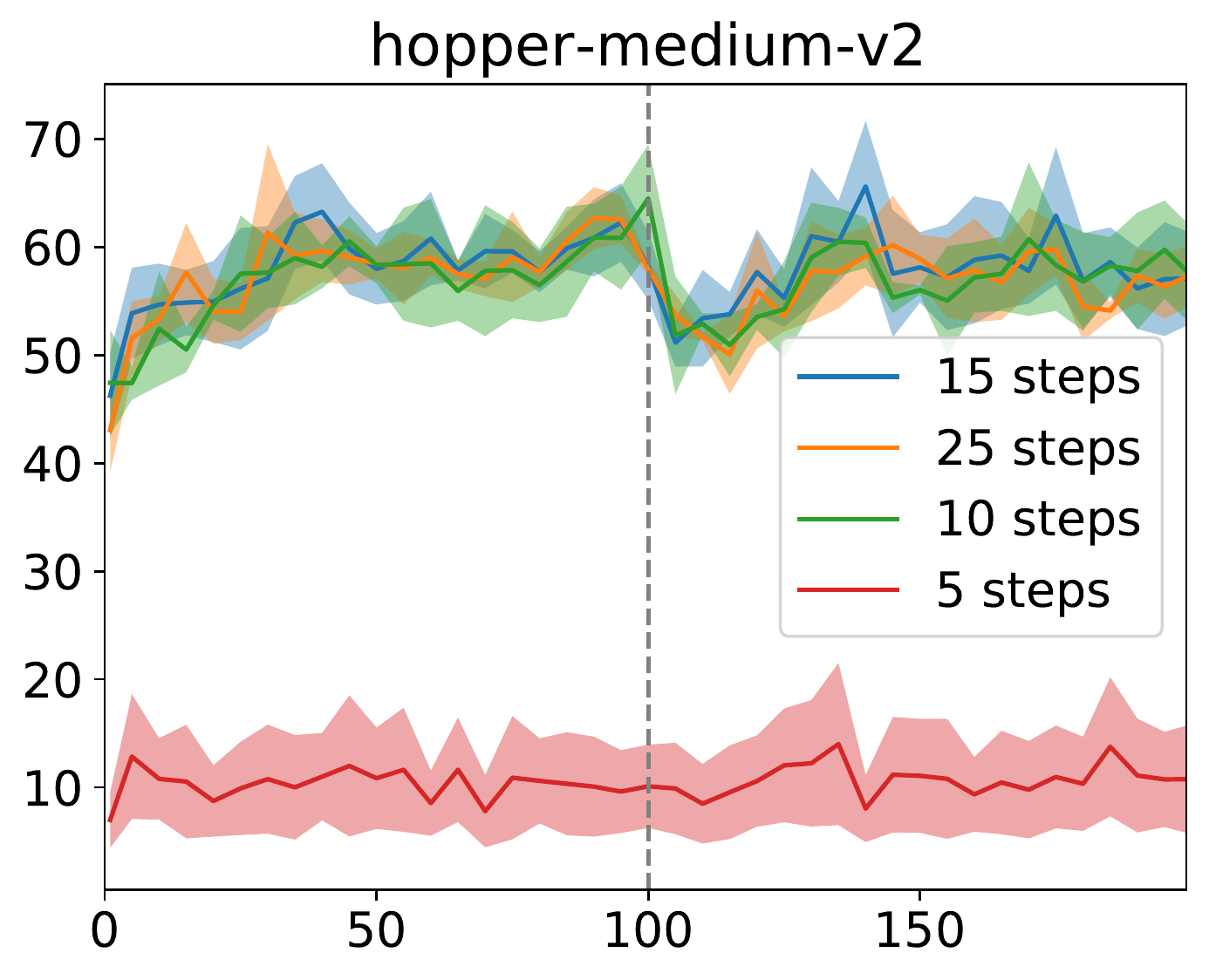}
\includegraphics[width = .30\linewidth]{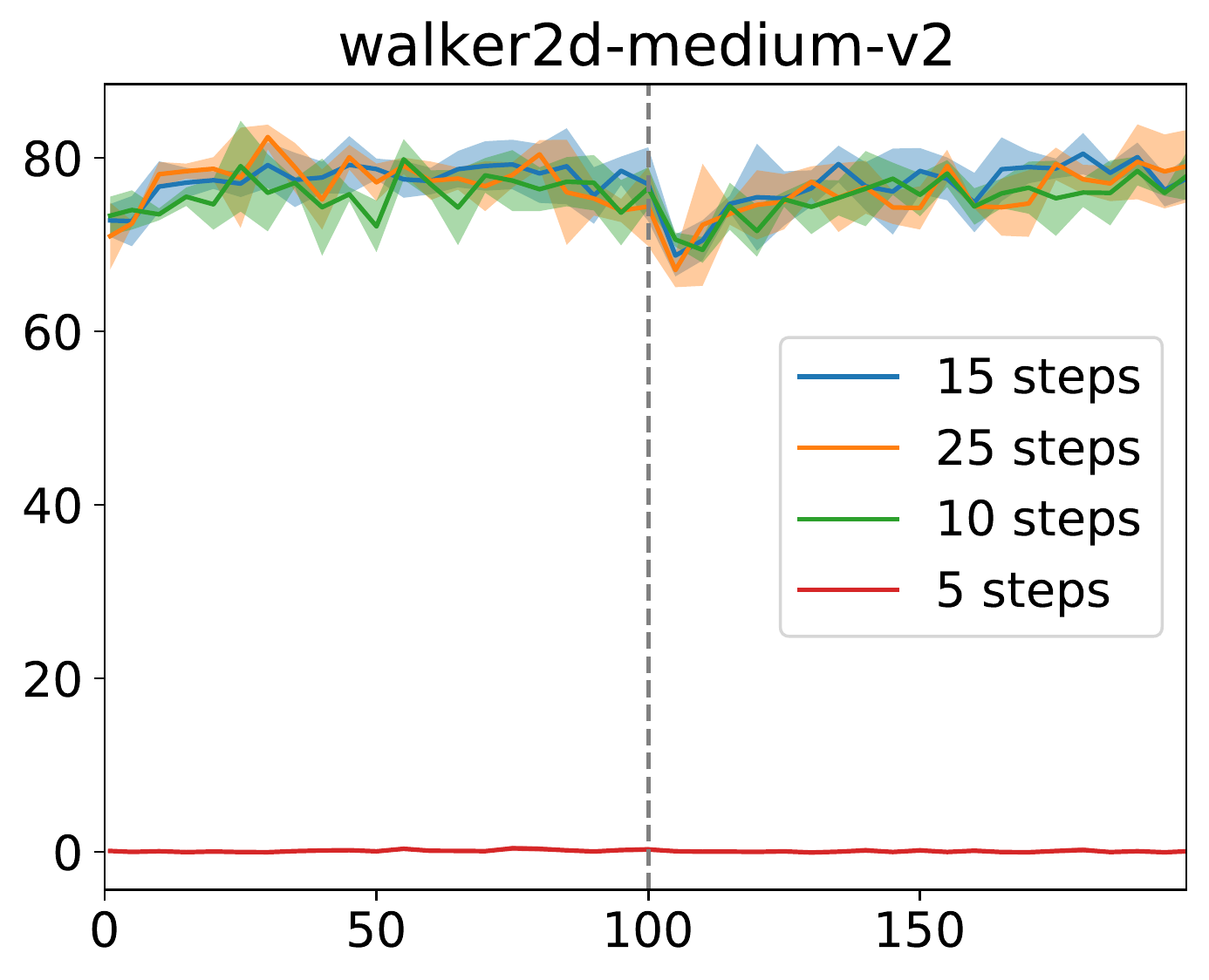}\\
\includegraphics[width = .30\linewidth]{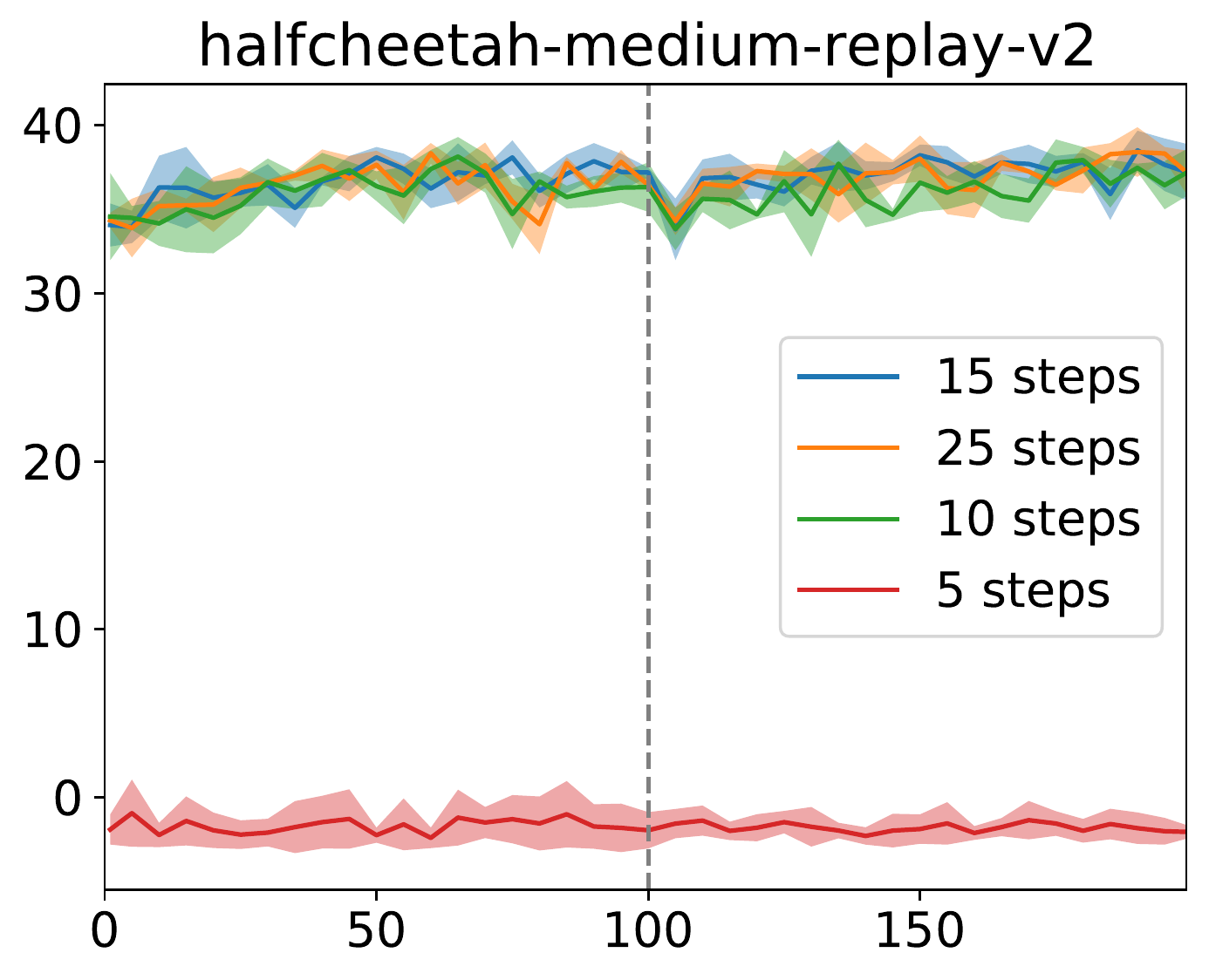}
\includegraphics[width = .30\linewidth]{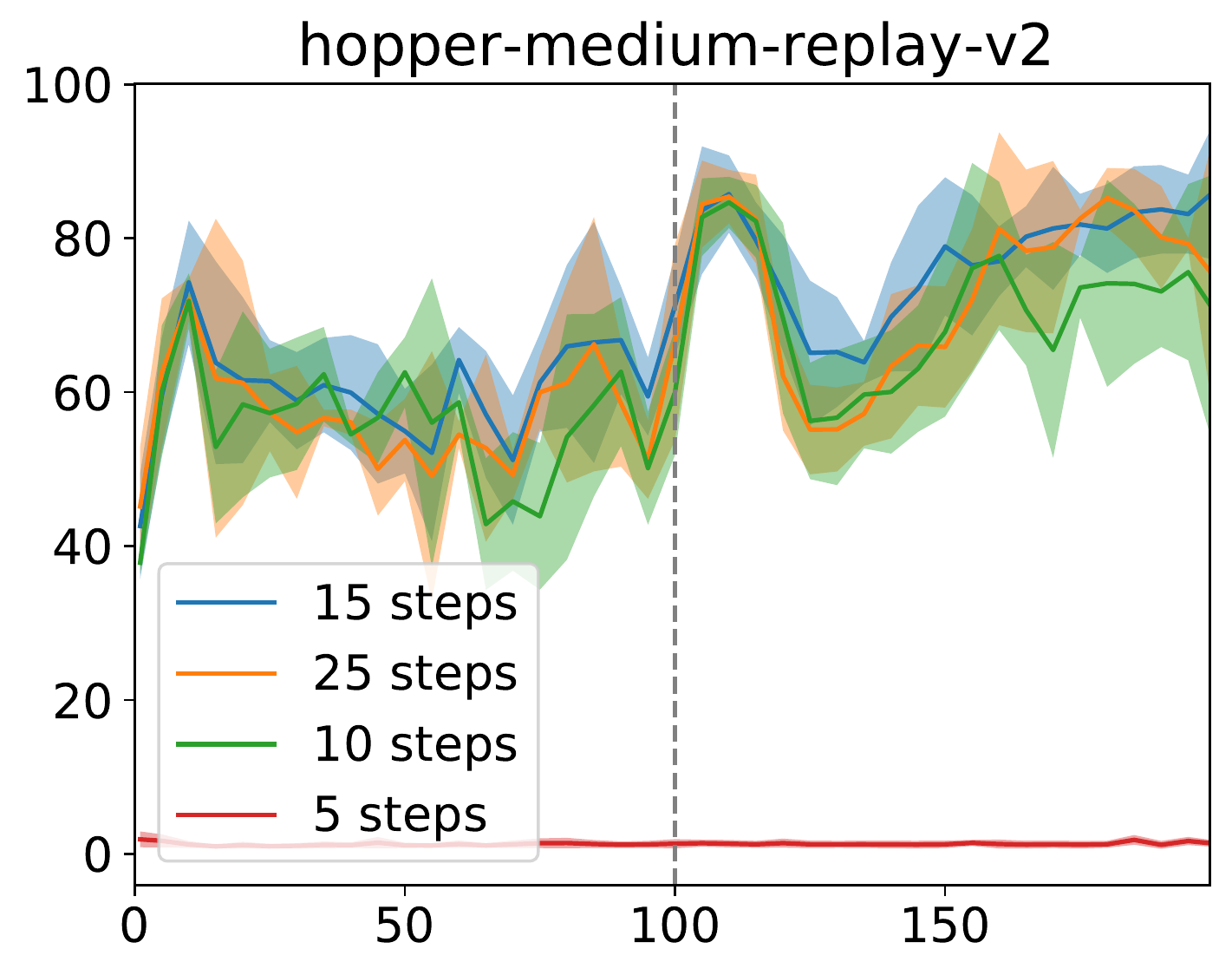}
\includegraphics[width = .30\linewidth]{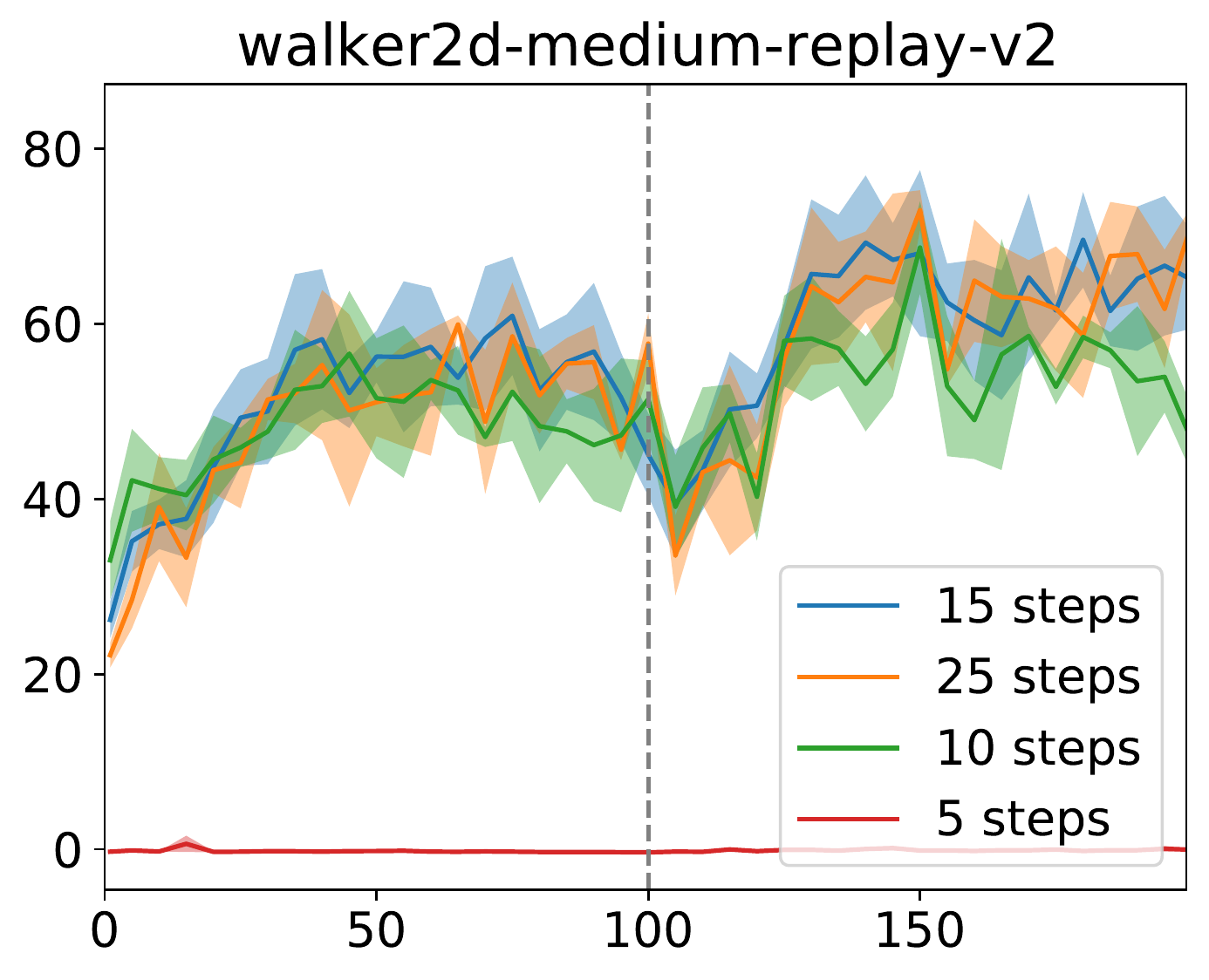}\\
\includegraphics[width = .30\linewidth]{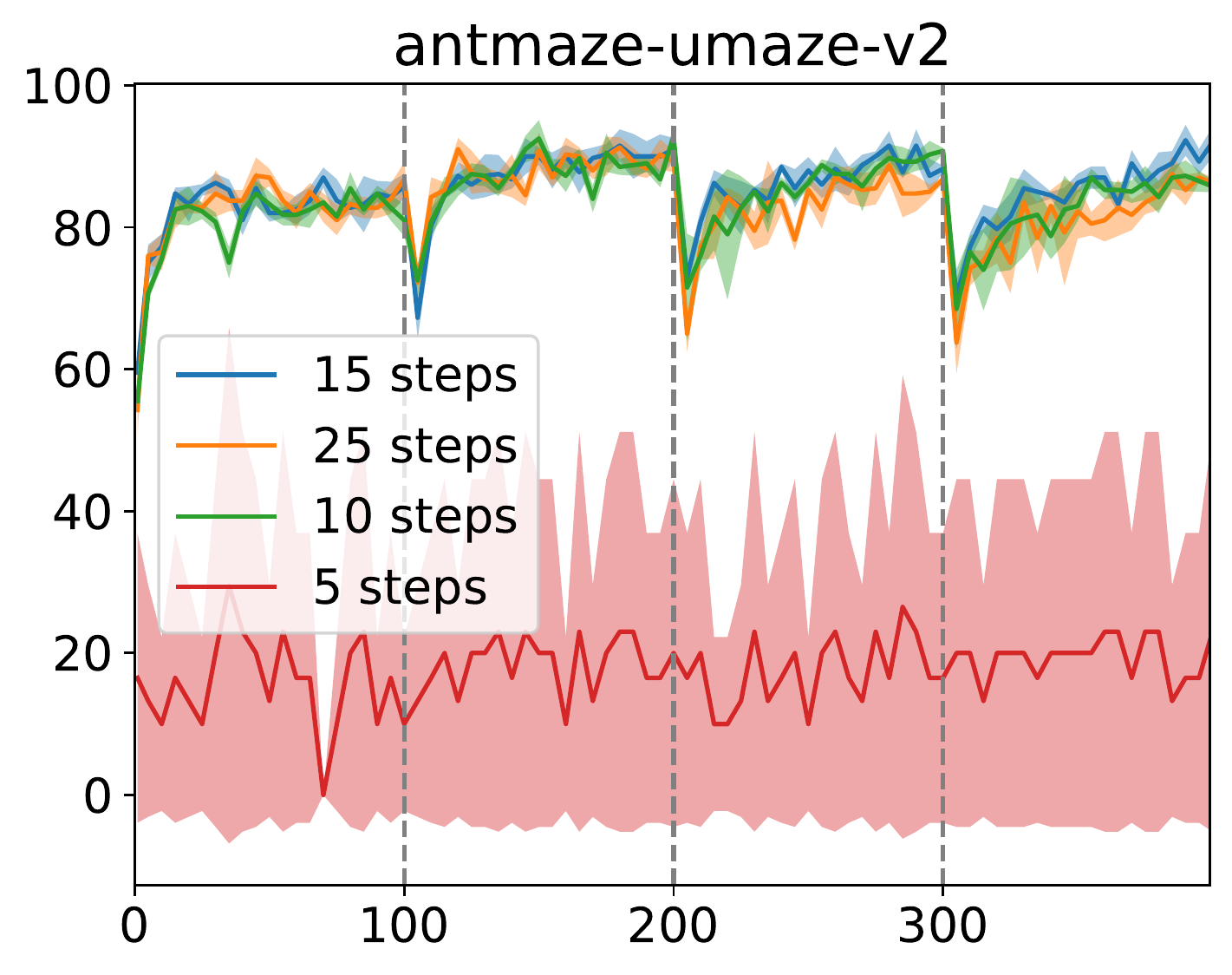}
\includegraphics[width = .30\linewidth]{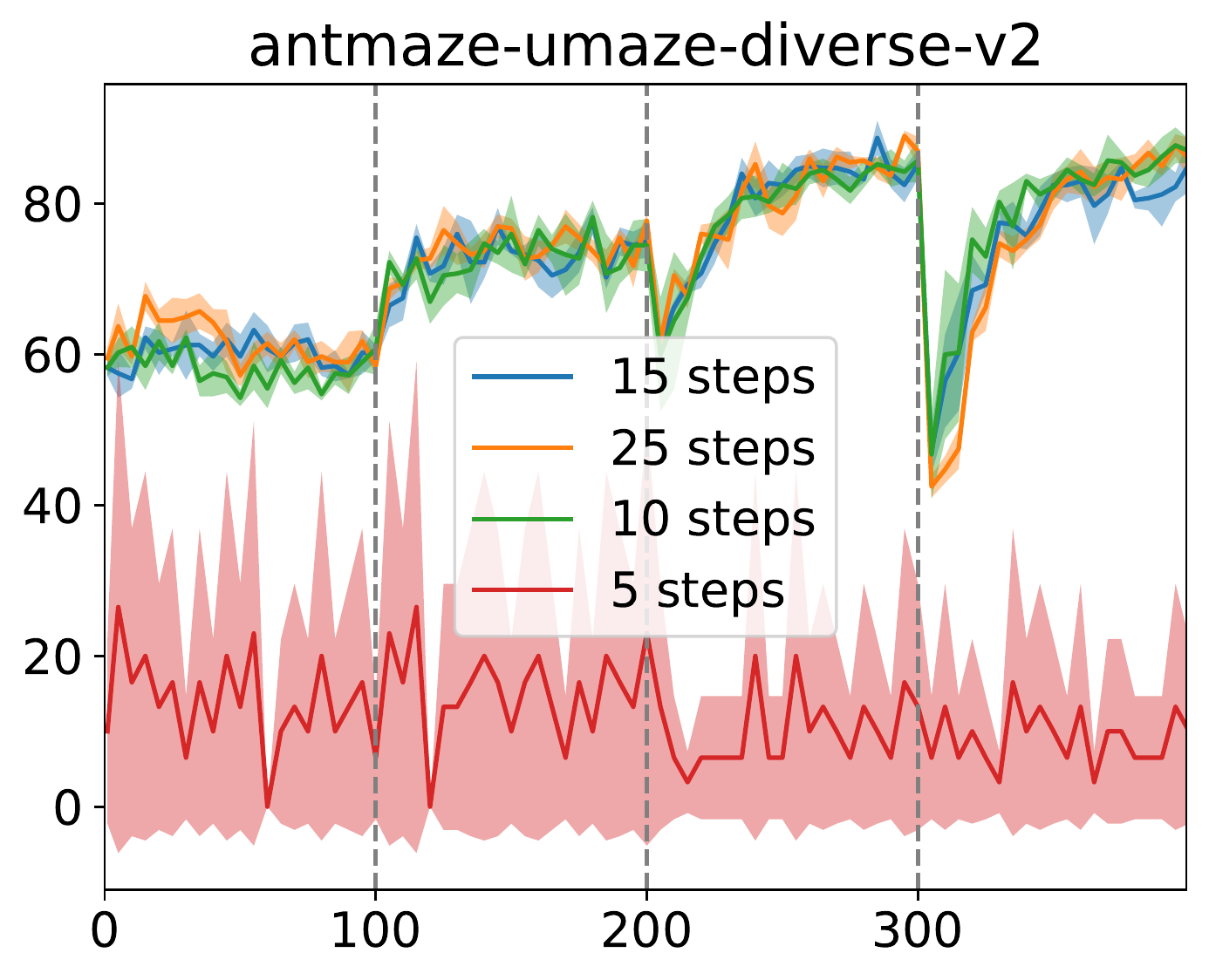}
\includegraphics[width = .30\linewidth]{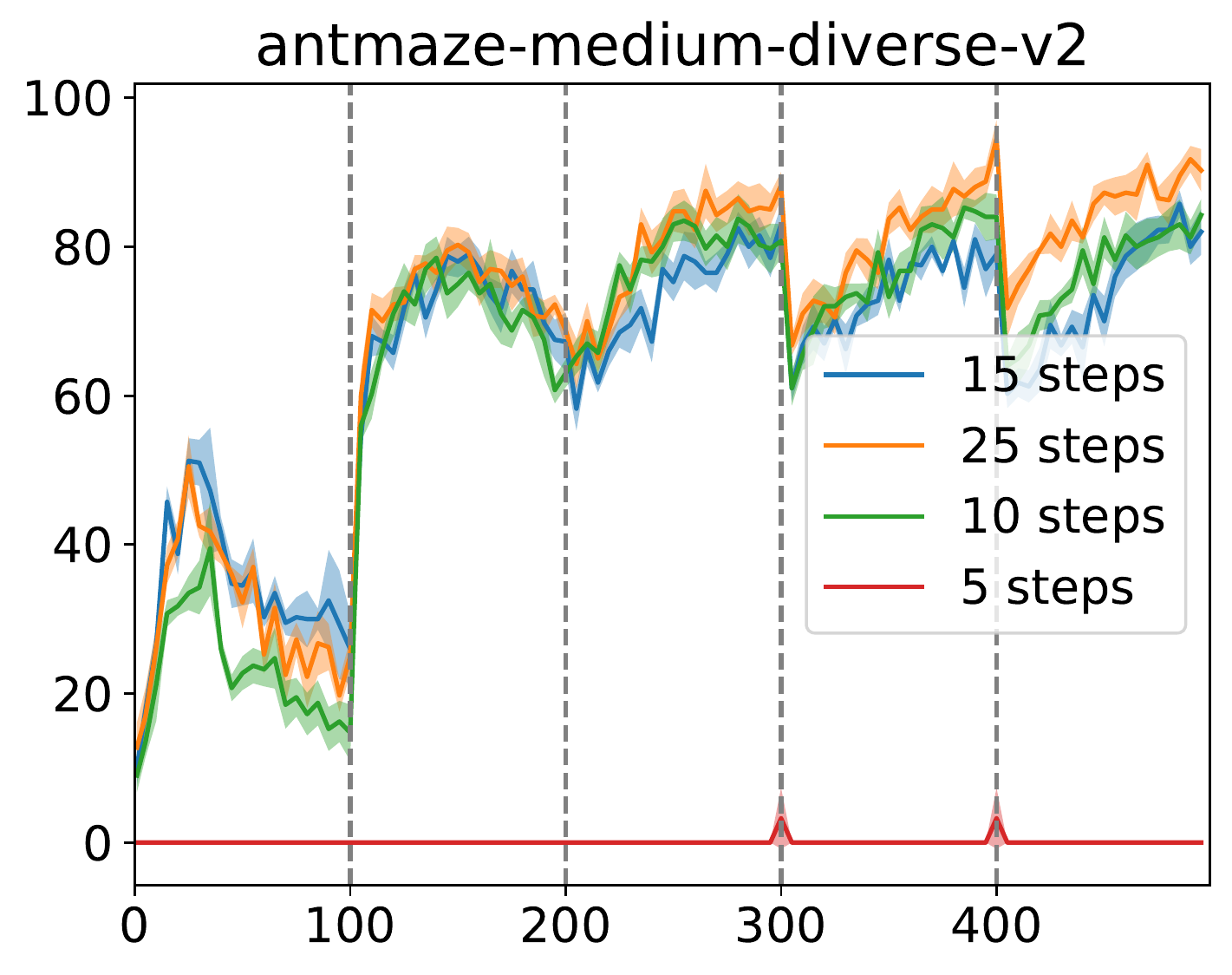}\\
\includegraphics[width = .30\linewidth]{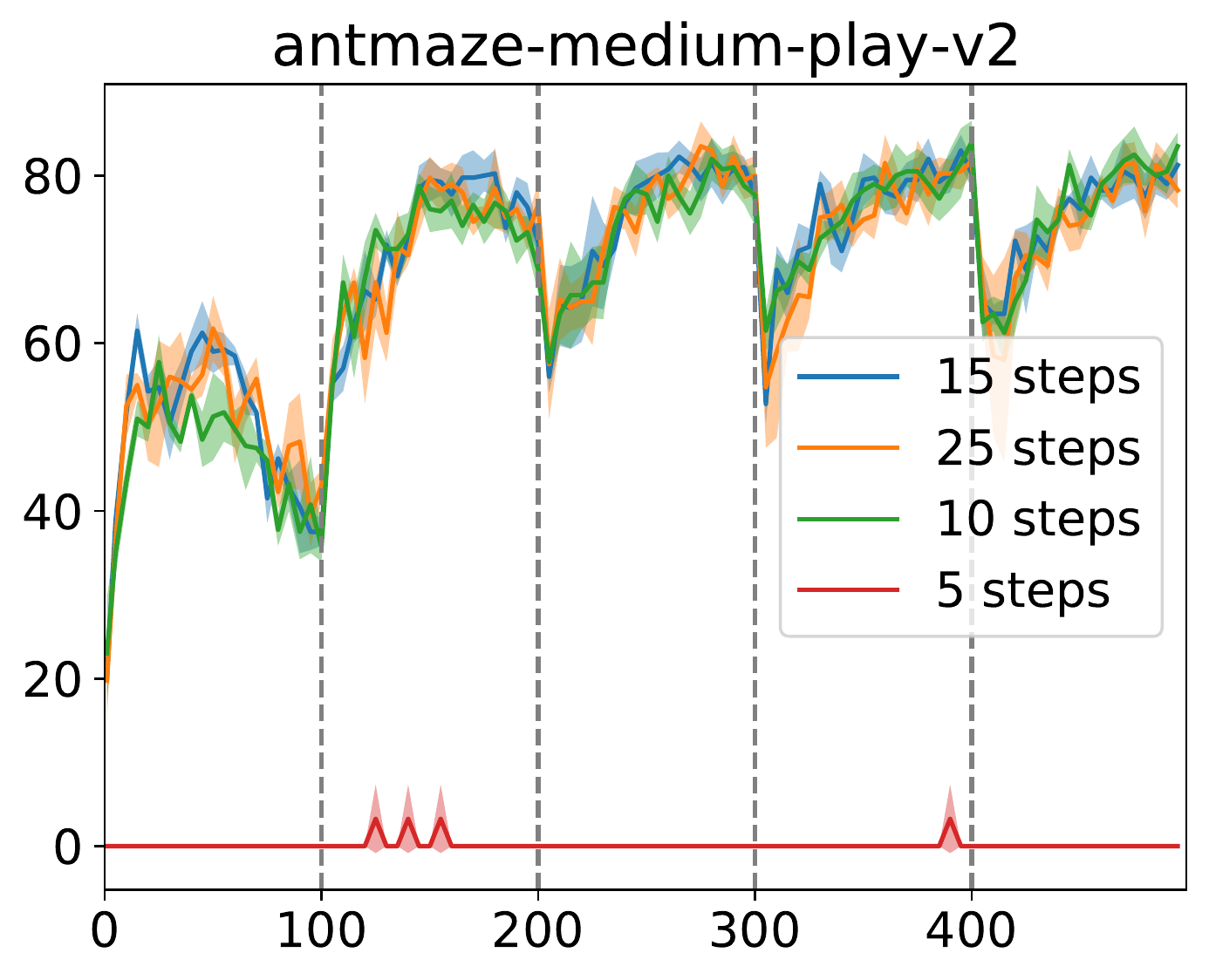}
\includegraphics[width = .30\linewidth]{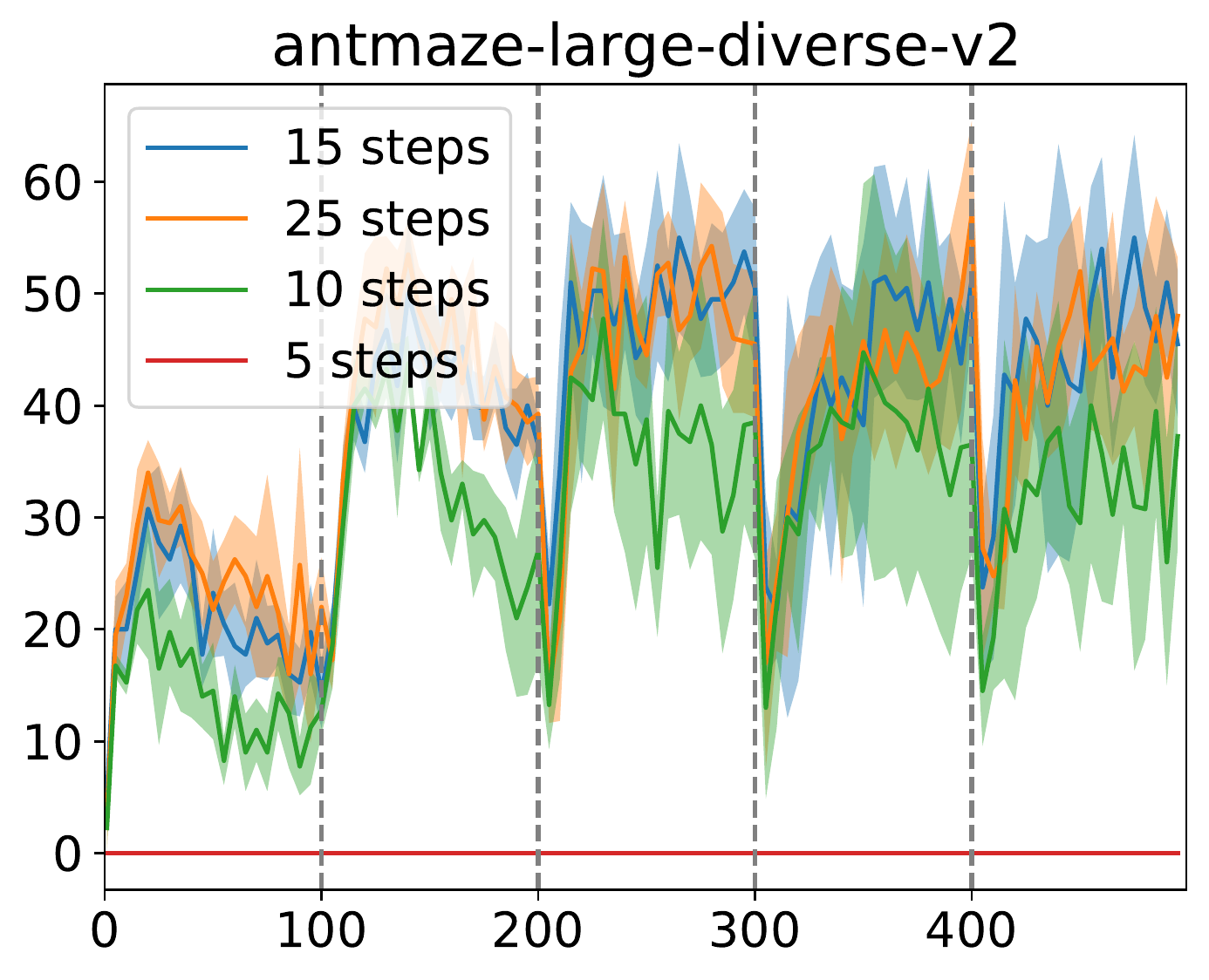}
\includegraphics[width = .30\linewidth]{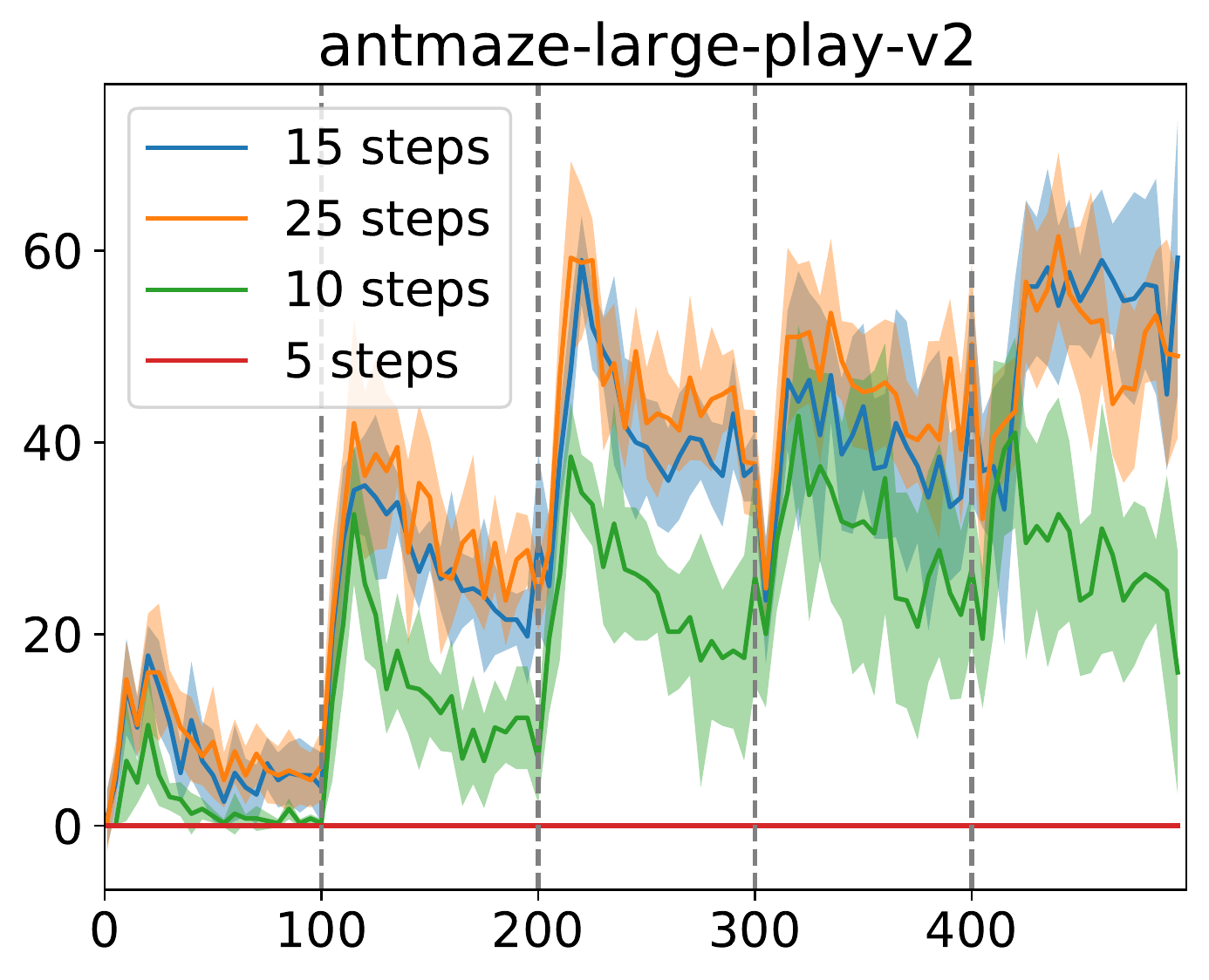}\\
\caption{Training curves of SfBC for MuJoCo and Antmaze tasks with different diffusion steps. We report the mean and standard deviation over four seeds for all experiments. Note that the parameters of the critic model are initialized at the beginning of each value iteration (every 100 data epochs).}
\vspace{-4mm}
\label{fig:plots}
\end{figure}

\newpage
\section{Connections to prior works}
\label{connection}
In this section, we discuss in more detail the connections between SfBC and two prior works, namely VEM \citep{vem} and EMAQ \citep{emaq}.

\subsection{VEM}
Our in-sample planning-based Q-operator $\mathcal{T}^\pi_\mu$ bears some similarity to the multi-step estimation operator $\mathcal{T}_{\text{vem}}$ proposed by \citet{vem}. A simplified version of $\mathcal{T}_{\text{vem}}$ is defined as:
\begin{equation}
    \mathcal{T}_{\text{vem}} V(\vs) := \max_{n \geq 0}\{(\mathcal{T}^{\mu})^{n}\mathcal{T}_\mu^\tau V(\vs)\},
\end{equation}
which is is built on $\mathcal{T}_\mu^\tau$, an expectile-based V-learning operator proposed by VEM:
\begin{equation}
\label{vem}
\mathcal{T}_\mu^\tau V(\vs):= \mathbb{E}_{\va \sim \mu(\cdot|\vs)}  \left\{\begin{array}{lll}
\tau[r(\vs, \va) + \gamma V(\vs')] + (1-\tau)V(\vs)  & \text { if } \quad r(\vs, \va) + \gamma V(\vs') \geq V(\vs)\\
(1-\tau)[r(\vs, \va) + \gamma V(\vs')] + \tau V(\vs) & \text { if } \quad r(\vs, \va) + \gamma V(\vs') < V(\vs)
\end{array}\right\}
\end{equation}
here $\tau \in [0,1)$ is a hyperparameter that helps interpolate the Bellman expectation operator $\mathcal{T}^{\mu}$ ($\tau = 0.5$) and the Bellman optimality operator $\mathcal{T}^{*}$ ($\tau \rightarrow 1.0$). $V(\cdot)$ is an arbitrary scalar function. $\mathcal{T}_\mu^\tau$ has some nice properties such as monotonicity ($\mathcal{T}_\mu^{\tau_2} V(\vs) > \mathcal{T}_\mu^{\tau_1} V(\vs)$ always holds for any $V$ given $\tau_2>\tau_1$). With these properties, \citet{vem} derives that $\mathcal{T}_{\text{vem}}$ and $\mathcal{T}_\mu^{\tau}$ share the same fixed point.

However, VEM cannot be applied to stochastic environments because \Eqref{vem} requires comparing $V(\vs)$ and $r(\vs, \va) + \gamma V(\vs')$. While $V(\vs)$ is a scalar given $\vs$, $r(\vs, \va) + \gamma V(\vs')$ is a random variable since $\vs' \sim P(\cdot|\vs, \va)$. To fix this problem, VEM simply assumes that the environment is deterministic, namely $r(\vs, \va)$ and $P(\cdot |\vs, \va)$ are all Dirac.

Compared with VEM, our in-sample planning-based Q-operator $\mathcal{T}^\pi_\mu$ is not dependent on the expectile-based V-operator $\mathcal{T}_\mu^\tau$, but uses an hypothetically improved policy $\pi > \mu$ for optimistic planning:
\begin{equation}
    \mathcal{T}_\mu^\pi Q(\vs,\va) := \max_{n \geq 0}\{(\mathcal{T}^{\mu})^{n}\mathcal{T}^{\pi}Q(\vs, \va)\},
\end{equation}
which does not require the environment to be deterministic. A disadvantage of using $\mathcal{T}^{\pi}$ to replace $\mathcal{T}_\mu^\tau$ is that we no longer have the monotonicity property (e.g., $\mathcal{T}^\pi Q(\vs, \va) > \mathcal{T}^\mu Q(\vs, \va)$ always holds for any $Q$). However, we can still derive that the fixed point of $\mathcal{T}_\mu^\pi$ is bounded between $Q^\pi$ and $Q^*$ (See Appendix \ref{analysis} for detailed results and proofs).

\subsection{EMaQ}
The high-level idea of the selecting-from-behavior-candidates approach bears some resemblance to the Expected-Max Q-Learning (EMaQ) algorithm proposed by \citep{emaq}. EMaQ is built upon BCQ \citep{bcq}, which computes the training target in Q-Learning by:
\begin{equation}
    \mathcal{T}_\text{BCQ}^* Q(\vs, \va) := r(\vs, \va) + \gamma \max_{\va' \sim \mu_\theta(\cdot|\vs')}\{Q(\vs', \va' + \xi_\phi(\vs', \va'))\},
\end{equation}
where $\xi_\phi(\vs, \va)$ is an explicitly constrained perturbation network that helps relax the constraint of behavior policy $\mu$. 
The core motivation for EMaQ is to remove the perturbation model $\xi_\phi(\vs, \va)$, by taking max over N Q-function evaluations:
\begin{equation}
    \mathcal{T}_\text{EMaQ}Q(\vs, \va) := r(\vs, \va) + \gamma \mathbb{E}_{\{\va_i'\}^N \sim \mu_\theta(\cdot|\vs')}[\max_{\va_i' \in \{\va_i'\}^N} Q(\vs', \va_i')].
\end{equation}

For EMaQ, $N$ serves as a hyperparameter to interpolate $\mathcal{T}^\mu$ and $\mathcal{T}^*$. When $N=1$, $\mathcal{T}_\text{EMaQ}$ becomes $\mathcal{T}^\mu$. When $N \rightarrow \infty$, $\mathcal{T}_\text{EMaQ}$ approaches $\mathcal{T}^*$ because $\{\va_i'\}^N$ nearly covers the whole action space.

In contrast, for SfBC, the hyperparameter $N$ is the number of Monte Carlo samples used to estimate the training Q-targets:
\begin{align*}
    \mathcal{T}^\pi Q(\vs, \va) =& r(s, a) + \gamma \mathbb{E}_{\va' \sim \pi(\cdot|\vs')} Q(\vs', \va') \\
    =& r(s, a) + \gamma \mathbb{E}_{\va' \sim \mu(\cdot|\vs')} \frac{\mathrm{exp}\left(\alpha Q(\vs', \va') \right)}{Z(\vs')} Q(\vs', \va')\\
    \approx	& r(s, a) + \gamma \sum_N \bigg[\frac{\mathrm{exp}\left(\alpha Q(\vs',\va') \right)}{\sum_N \mathrm{exp}\left(\alpha Q(\vs',\va') \right)}Q(\vs',\va')\bigg]
\end{align*}

\end{document}